\newtheorem{defn}{Definition}[section]
\newtheorem{thm}{Theorem}[section]
\newtheorem{lemma}{Lemma}[section]
\newtheorem{prop}{Proposition}[section]
\theoremstyle{definition}
\newtheorem{remark}{Remark}[section]
\newcommand{\critR}{(\nabla R)^{-1}(0)}
\newcommand{\Rzero}{R^{-1}(0)}
\newcommand{\ta}{\Tilde{a}}
\newcommand{\tw}{\Tilde{w}}
\newcommand{\ttheta}{\Tilde{\theta}}
\title{\textbf{Geometry of Critical Sets and Existence of Saddle Branches for Two-layer Neural Networks}}
\date{}
\author{Leyang Zhang\footnote{leyangz\_hawk@outlook.com}, Yaoyu Zhang\textsuperscript{\rm 1}\footnote{zhyy.sjtu@sjtu.edu.cn}, Tao Luo\textsuperscript{\rm 1,2}\footnote{luotao41@sjtu.edu.cn, corresponding author}\\
\textsuperscript{\rm 1} School of Mathematical Sciences, Institute of Natural Sciences, MOE-LSC \\ 
Shanghai Jiao Tong University \\
\textsuperscript{\rm 2} CMA-Shanghai, Shanghai Artificial Intelligence Laboratory\\
}
\begin{document}

\maketitle

\begin{abstract}
    This paper presents a comprehensive analysis of critical point sets in two-layer neural networks. To study such complex entities, we introduce the critical embedding operator and critical reduction operator as our tools. Given a critical point, we use these operators to uncover the whole underlying critical set representing the same output function, which exhibits a hierarchical structure. Furthermore, we prove existence of saddle branches for any critical set whose output function can be represented by a narrower network. Our results provide a solid foundation to the further study of optimization and training behavior of neural networks.
\end{abstract}

\section{Introduction}

Neural networks have achieved success in a wide range of applications, but their high performance is less understood. Theoretical studies are thus made to uncover such mysteries. One major area in theoretical study is the analysis of loss landscape. The study is challenging because of loss landscapes' various possible shapes \cite{RHong, Skorokhodov}, the high dimensionality problem, and its complicated dependence on data, model structure, and loss function \cite{OCalin, RSunOverview}. \\

In recent years numerous works have been trying to understand the loss landscape of neural networks in various ways by focusing on its simple structures behind these complicated phenomenon. One direction is discovering the general structures and properties of overparameterized models that are possessed by neural networks. Examples include the analysis of global minimum dimension \cite{YCooper}, which only requires smoothness and overparameterized setting, and the optimistic sample estimate, which works for almost any models analytic in parameters and inputs \cite{OptmisticEstimate}. Another direction focuses on the special properties of neural networks, such as the study of critical points. For example, criticality is preserved when increasing model width \cite{EbddPrincipleLong, EbddPrincipleShort}, saddles instead of local minima are what mainly affects training dynamics \cite{Dauphin}, non-existence of spurious valleys for wide neural networks \cite{LVenturi}, etc. However, we still do not have a clear picture about the geometry and functional properties of the set of critical points. \\

In this paper, we make a step further by studying the critical sets, i.e., sets of critical points, of a neural network. Focusing on two-layer neural networks and utilizing the special properties of it, we characterize the geometry of the critical sets representing a given critical function, and we discuss the existence of saddles in these sets. Surprisingly, we show that these sets have a hierarchical but simple structure, and we identify lots of saddles in these sets. More precisely, our contribution in this work can be summarized as follows.
\begin{itemize}
    \item [(a)] Given a critical point of a neural network, the critical set representing the same output function is a finite union of its subsets. Each subset is a Cartesian product of an Euclidean space and several identical submanifolds. 
    \item [(b)] Moreover, if the output function can be represented by a narrower (fewer neurons) network than the model, there exist many saddles (saddle branches) in the critical set representing it. 
    \item [(c)] We present two maps, the critical embedding and critical reduction operators to help us study the geometry and functional properties of critical sets. 
\end{itemize}

\section{Related Works}\label{Section Related Works}

\textbf{Geometry of critical sets.} There have been many works studying the geometry of critical sets. For example, the Embedding Principle shows that the loss landscape of a neural network contains those of narrower ones \citep{KFukumizu, KFukumizu2, BSimsek, EbddPrincipleShort, EbddPrincipleLong}. Using a critical embedding operator which preserves output function and criticality of loss, these works characterize a subset of the critical points representing a given output function. In this paper we follow this idea, but also introduce another criticality-preserving operator, i.e., critical reduction operator. We use both operators to give a complete characterization of these sets. Other works focus on global minima. \cite{YCooper} shows how sample size determines the dimension of global minima for generic samples in overparameterized regime. Under teacher-student setting, ref. \cite{LZhangGlobal} gives a complete characterization of the structure and gradient dynamics near global minima. In contrast, our paper mainly focus on non-global critical points/sets. \\

\textbf{Analysis of Saddles.} Under the belief that saddles rarely trap gradient methods \citep{Skorokhodov}, some works try to show the prevalence of saddles in loss landscape of a neural network. Refs. \cite{KFukumizu, KFukumizu2, BSimsek, EbddPrincipleShort, EbddPrincipleLong} showed that in general embedding a local minimum of a narrower network to a wider one tends to produce strict saddles. Additionally, research by \cite{LVenturi} and \cite{RSunWideNN} revealed that when the width of a neural network exceeds the sample size, saddles not only exist but in fact there are no spurious valleys. Similar works have been made for deep linear network \citep{Nguyen1, Nguyen2}, and furthermore classifies its strict and non-strict saddles \citep{EAchour}. Our work describes two types of saddles, one called the ``embedding saddles'', while the other produces ``saddle branches'' which cannot be described by Embedding Principle. 


\section{Main Results}

\subsection{Preliminaries}\label{Subsection Preliminaries}

Throughout the paper we will work with two-layer (fully-connected) neural networks. Given $m \in \bN$, a two-layer neural network of width $m$ is denoted by $g_m$, or just $g$ when the width is clear from context: 
\begin{equation}
    g(\theta, x) = g_m(\theta, x) = \sum_{k=1}^m a_k, \sigma(w_k^\TT x)
\end{equation}
where $\theta = (a_k, w_k)_{k=1}^m = (a_1, w_1, ..., a_m, w_m) \in \bR^{(d+1)m}$ is parameter and $x \in \bR^d$ is input. We call each $w_k \in \bR^d$ an input weight and each $a_k \in \bR$ an output weight. Given samples $\{(x_i, y_i) \in \bR^{d} \times \bR\}_{i=1}^n \in \bR^{nd}$, let $e_i(\theta) = e_i(g, \theta) = g(\theta, x_i) - y_i$ for each $i$ and define the $L^2$ loss function 
\[
    R(\theta) = R(g, \theta) = \sum_{i=1}^n \left( g(\theta, x_i) - y_i \right)^2. 
\]

In this paper, we will always assume that our activation $\sigma: \bR \to \bR$ is an analytic non-polynomial function. One important result for such activation is about the linear independence of neurons which we show below. This result helps us characterize the critical sets by determining the minimal width of output functions they represent. 

\begin{lemma}\label{Asmp Generic activation II}
    Let $\sigma: \bR \to \bR$ be an analytic non-polynomial. Then for any $d \in \bN$, $m \ge 2$ and any $w_1, ..., w_m \in \bR^d$, the neurons $\sigma(w_1^\TT x), ..., \sigma(w_m^\TT x)$ are linearly independent if and only if every two of them are linearly independent. 
\end{lemma}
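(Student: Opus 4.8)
The ``only if'' direction is immediate, since any subfamily of a linearly independent family is linearly independent. The plan for the converse is a two-stage reduction to a one-variable statement about rescaled copies of $\sigma$. Assume the neurons are pairwise linearly independent and $\sum_{k=1}^m a_k\sigma(w_k^{\top}x)=0$ on $\mathbb{R}^d$; the goal is $a_1=\dots=a_m=0$. Pairwise independence already disposes of the easy degeneracies: the $w_k$ are pairwise distinct (equal weights give the same neuron), at most one $w_k$ equals $0$, and if some $w_k=0$ then $\sigma(0)\neq0$ (otherwise that neuron is the zero function, which is dependent with every other). Expanding $\sigma(s)=\sum_{n\ge0}c_n s^n$ around $0$ and using that an identity of real-analytic functions amounts to the identity of all Taylor coefficients at $0$, the hypothesis becomes: for every $n$ in the index set $N:=\{n:c_n\neq0\}$, the homogeneous identity $\sum_{k=1}^m a_k(w_k^{\top}x)^n=0$ holds on $\mathbb{R}^d$; and $N$ is infinite because $\sigma$ is not a polynomial.

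For the first reduction I would group the indices by the lines $\mathbb{R}w_k$: choose pairwise non-parallel vectors $v_1,\dots,v_p$ so that each nonzero $w_k$ equals $\lambda_k v_l$ for exactly one $l$, with the $\lambda_k$ distinct and nonzero inside each class. Within the $l$-th class the terms collapse, $(w_k^{\top}x)^n=\lambda_k^n(v_l^{\top}x)^n$, so $\sum_{k=1}^m a_k(w_k^{\top}x)^n=\sum_{l=1}^p\bigl(\sum_{k\ \text{in class}\ l}a_k\lambda_k^n\bigr)(v_l^{\top}x)^n$ for $n\ge1$ (a possible zero weight contributes only at $n=0$ and is treated at the end). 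Now I would invoke the standard fact that for pairwise non-parallel $v_1,\dots,v_p$ the powers $(v_1^{\top}x)^n,\dots,(v_p^{\top}x)^n$ are linearly independent whenever $n\ge p-1$: restrict $x$ to a generic two-dimensional subspace, on which the $v_l$ stay pairwise non-parallel, and note that the resulting forms lie on a rational normal curve of degree $n$ in $\mathrm{Sym}^n(\mathbb{R}^2)$, where any $p\le n+1$ distinct points are linearly independent (a Vandermonde computation). Since $N$ is infinite it contains infinitely many $n\ge\max(1,p-1)$, and for each such $n$ we get $\sum_{k\ \text{in class}\ l}a_k\lambda_k^n=0$ for every class $l$.

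The second reduction is purely one-dimensional: within each class we are left with distinct nonzero reals $\lambda_k$ and a relation $\sum_k a_k\lambda_k^n=0$ valid for all $n$ in a cofinite subset of $N$ — equivalently, $\sum_k a_k\sigma(\lambda_k\cdot)$ is a polynomial. It is here that analyticity and non-polynomiality of $\sigma$ must be used in an essential way: splitting $\sigma$ into its even and odd parts and substituting $s\mapsto s^2$ turns the problem into a power-sum identity with distinct positive bases, where dividing by the largest base and letting $n\to\infty$ forces the coefficients to vanish one at a time (the pairs $\lambda_i=-\lambda_j$ are exactly where the parity of $\sigma$ enters, and such pairs are controlled by pairwise independence). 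This yields $a_k=0$ throughout each class, after which matching constant terms kills the coefficient of a zero weight, so all $a_k=0$.

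The routine parts are the two collapses and the Vandermonde/rational-normal-curve argument; the genuine obstacle is the final one-variable step, namely excluding linear relations among $\sigma(\lambda_1\cdot),\dots,\sigma(\lambda_q\cdot)$ (for distinct nonzero $\lambda_k$) that are invisible on pairs. I expect this to be the place where the precise properties assumed of the activation are indispensable, and where care is needed for activations whose even or odd part degenerates to a polynomial.
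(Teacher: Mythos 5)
Your multivariate reduction is sound and in fact more elaborate than what the paper does: instead of grouping weights by lines and invoking a Vandermonde/rational-normal-curve fact, the paper restricts $x$ to a single generic direction $v$ chosen so that $w_k^\top v \pm w_j^\top v \neq 0$ for all distinct $k,j$ (and $w_k^\top v \neq 0$ where relevant), which collapses everything to one variable in one stroke and then runs exactly the dominant-base argument you sketch. The genuine gap is the step you yourself flag as ``the genuine obstacle'': you never prove the one-variable claim, and it cannot be proved at the stated level of generality. The dominant-base argument only yields the conditions $\sum_k a_k\lambda_k^n=0$ for $n$ in the infinite set $N=\{n:c_n\neq 0\}$; when a class contains a pair $\lambda_i=-\lambda_j$ (which pairwise independence does \emph{not} exclude unless $\sigma$ has exact parity), the even- and odd-degree conditions decouple, and if the odd (or even) part of $\sigma$ is a nonzero polynomial, only finitely many conditions of that parity survive and they can admit nontrivial solutions.

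Concretely, take $\sigma(s)=s+\cosh(s)$, which is analytic and non-polynomial, and weights $v,-v,2v,-2v$ for any $v\neq 0$. Each pair of the four neurons is linearly independent (the $\cosh$ parts have distinct growth rates and are independent of the linear parts), yet
\[
2\sigma(v^\top x)-2\sigma(-v^\top x)-\sigma(2v^\top x)+\sigma(-2v^\top x)=4\,v^\top x-4\,v^\top x=0 .
\]
So the lemma as stated fails for this activation, and your final step is not merely delicate but impossible without further hypotheses on $\sigma$. This is consistent with the paper's own proof, which treats only two cases --- (i) $\sigma^{(s)}(0)\neq 0$ for infinitely many even and infinitely many odd $s$, handled by citing \cite{BSimsek}, and (ii) $\sigma$ exactly even or odd, handled by the generic-direction dominant-term argument --- and these two cases do not exhaust all analytic non-polynomials (the example above falls in neither). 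Your closing remark that care is needed ``for activations whose even or odd part degenerates to a polynomial'' identifies precisely the missing hypothesis; to complete the argument you must either assume such a condition on $\sigma$ or accept the counterexample.
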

\begin{proof}
    See the proof of \ref{ALem Generic activation II} in Appendix. 
\end{proof}

In particular, 
\begin{itemize}
    \item [(a)] If $\sigma$ does not have parity and $\sigma(0) \ne 0$, then $\sigma(w_1^\TT x), ..., \sigma(w_{m'}^\TT x)$ are linearly independent if and only if $w_{k_1} \ne w_{k_2}$ for all distinct $k_1, k_2 \in \{1, ..., m'\}$. 

    \item [(b)] If $\sigma$ does not have parity and $\sigma(0) = 0$, then $\sigma(w_1^\TT x), ..., \sigma(w_{m'}^\TT x)$ are linearly independent if and only if $w_{k_1} \ne w_{k_2}$ for all distinct $k_1, k_2 \in \{1, ..., m'\}$, or $w_k = 0$ for some $k$. 

    \item [(c)] If $\sigma$ is an odd or even function, then $\sigma(w_1^\TT x), ..., \sigma(w_{m'}^\TT x)$ are linearly independent if and only if $w_{k_1} \ne w_{k_2}$ for all distinct $k_1, k_2 \in \{1, ..., m'\}$, or when $\sigma(0) = 0$ we have an extra case $w_k = 0$ for some $k$. 
\end{itemize}

\subsection{Illustration of Main Results}\label{Subsection Illust of main results}

In this section we first introduce the main results in Section \ref{Section Criticality preserving operators} and Section \ref{Section Geometry and functional properties of critical sets} by analyzing a concrete example. Then we present the main results in an informal way. Consider a single neuron network $g_1((a', w'), x) = a' e^{w'^{\TT} x}$ with $a' \in \bR$ and $w', x \in \bR^2$. Consider four samples 
\[
    (x_1, y_1) = ((1,0),1),\, (x_2, y_2) = ((0,1), 1),\, (x_3, y_3) = ((1,1), 0),\, (x_4, y_4) = ((1,-1), 0). 
\]
and define the loss $R(g_1, \cdot)$ with respect to these samples. 
It is not difficult to see that $\theta' = (a',w') = (1,(\log\frac{1}{3}, 0))$ is the critical point of $R$ with non-zero output weight. Moreover, $R(g_1, \theta') > 0$. \\

Let $g_m(\theta, x) = \sum_{k=1}^m a_k e^{w_k^\TT x}$ for any $m \ge 3$. Up to permutations of the components of a point, the set of critical points $\theta$ for $R(g_m, \cdot)$ with $g_m(\theta, \cdot) = g_r(\theta', \cdot)$ is the union of the following sets 
\begin{align*}
    \calC^{1,m-1} &:= \left\{ (1, w', 0, w_2, ..., 0, w_m): w_k \in \calM_{\theta'}, \forall\,2 \le k \le m \right\} \\ 
    \calC^{1, m-2} &:= \left\{ (\delta, w', 1-\delta, w', 0, w_3, ..., 0, w_m): w_k \in \calM_{\theta'}, \forall\, 3\le k \le m, \delta \in \bR\cut\{0,1\} \right\} \\ 
    &\vdots \\
    \calC^{1,0} &:= \left\{ (\delta_1, w', \delta_2, w', ..., \delta_m, w'): \delta_1, ..., \delta_m \ne 0, \sum_{k=1}^m \delta_m = 1 \right\}. 
\end{align*}
Here $\calM_{\theta'}$ is the zero set of $w \mapsto \sum_{i=1}^n e_i(g_r, \theta') \sigma(w^\TT x_i)$; in particular, $\calM_{\theta'} = \bR \times \{0\}$. On the other hand, using the splitting embedding operator, we only get proper subsets of them. Moreover, we notice that i) $\calC^{1,l_1}, \calC^{1,l_2}$ are connected for all $l_1, l_2 \in \{0, 1, ..., m-1\}$ and ii) any point in $\calC^{1,l}$ with $l > 0$ is a saddle, while part of $\calC^{1,0}$ are strict saddles. See also Figure \ref{Figure Stratification C1} below.

\begin{figure}[h]
    \centering
    \includegraphics[width=0.85\textwidth]{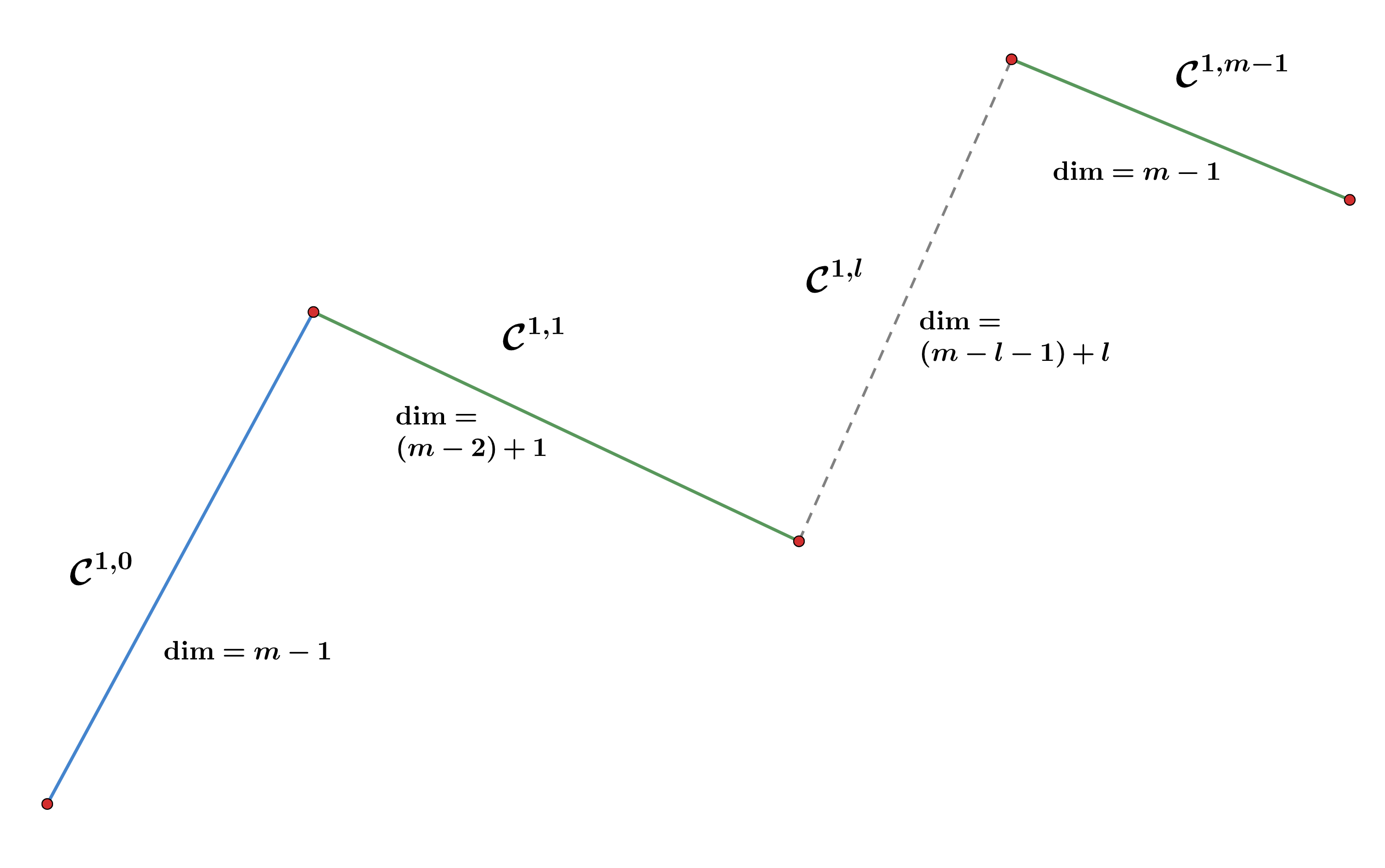}
    \caption{Illustration of $\calC^{1,l}$ for $0 \le l \le m-1$ defined above. For each $0 \le l \le m-1$, $\calC^{1,0}$ is an affine subspace of dimension $(m-l-1) + l = m-1$. The set \textcolor{blue}{$\calC^{1,0}$} has strict saddles, and points in all the other \green{$\calC^{1,l}$ with $1 \le l \le m-1$} are saddles. Moreover, these critical sets are connected to one another, as shown in this figure. See Lemma \ref{ALem for example} and its remark for a proof.}
    \label{Figure Stratification C1}
\end{figure}

In general, we have the following results for the observations above. First, we completely characterize the geometry of critical sets representing a given critical function. 

\begin{thm}[branch geometry -- informal]\label{Thm Branch geometry informal}
    The critical points of a width-$m$ neural network representing a critical network of minimal width $r \le m$, say $g_r(\theta', \cdot)$ for some $\theta' \in \bR^{(d+1)r}$, forms a finite union of sets which we call branches. Each branch is isometric to $\bR^{N} \times \Pi_{j=1}^l \calM_{\theta'}$, where $N,l \in \bN$ are independent of $\theta'$ and samples, while the manifold $\calM_{\theta'} \siq \bR^d$ depends on $\theta'$ and samples, with its dimension $\dim \calM_{\theta'} \le d-1$. Moreover, these branches are connected to each other. 
\end{thm}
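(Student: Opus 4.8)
The plan is to build the characterization from the bottom up, starting from a single critical point $\theta$ of $g_m$ with $g_m(\theta,\cdot)=g_r(\theta',\cdot)$ and $r$ minimal. First I would use Lemma \ref{Asmp Generic activation II} (and its corollaries) to understand how the $m$ neurons of $\theta$ can realize the function $g_r(\theta',\cdot)$: grouping the input weights $w_1,\dots,w_m$ by coincidence, the distinct nonzero values must be precisely the $r$ input weights $w'_1,\dots,w'_r$ of $\theta'$ (up to the degenerate cases in parts (a)--(c)), and the output weights in each group must sum to the corresponding $a'_j$. This gives a first combinatorial decomposition of the critical set indexed by how the $m$ neurons are partitioned into the $r$ ``active'' groups plus one ``null'' group (neurons with zero effective contribution); each such combinatorial type is a candidate branch, and there are finitely many of them, which yields the finite union.

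Next, within one combinatorial type, I would write the criticality equations $\nabla_\theta R(\theta)=0$ explicitly. The gradient splits into output-weight equations $\partial_{a_k}R = 2\sum_i e_i(\theta)\sigma(w_k^\TT x_i)=0$ and input-weight equations $\partial_{w_k}R = 2a_k\sum_i e_i(\theta)\sigma'(w_k^\TT x_i)x_i=0$. Since $e_i(\theta)=e_i(g_r,\theta')$ depends only on the represented function, these become conditions on each neuron separately: each input weight $w_k$ lies in the zero set $\calM_{\theta'}$ of $w\mapsto\sum_i e_i(g_r,\theta')\sigma(w^\TT x_i)$ (for the active groups this is automatic because $w'_j\in\calM_{\theta'}$ by criticality of $\theta'$; for the null group it is a genuine constraint, and the $a_k$ there are free), while the output-weight constraints are the affine ``sum to $a'_j$'' relations. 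Collecting coordinates: the active groups contribute the affine constraints on the $a_k$ (an affine space $\bR^N$ after eliminating one coordinate per group) together with, for each of the $l$ null-group neurons, a free copy of $\calM_{\theta'}$; the product structure $\bR^N\times\prod_{j=1}^l\calM_{\theta'}$ then follows, and the isometry is just the obvious identification of coordinates (one must check the induced metric is the product metric, which holds because the blocks involve disjoint coordinates). One also needs $\calM_{\theta'}$ to be a manifold of dimension $\le d-1$: it is the zero set of a single real-analytic function on $\bR^d$ that is not identically zero (nonvanishing would force $e_i(g_r,\theta')=0$ for all $i$ by Lemma \ref{Asmp Generic activation II}, i.e. $R(\theta')=0$, contradicting that we are in the non-global case — or one argues via the minimal-width hypothesis), so generically it is a hypersurface; I would either invoke analyticity/Sard-type genericity or simply state $\dim\calM_{\theta'}\le d-1$ as a consequence of it being a proper analytic subset.

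Finally, for connectedness of the branches, I would exhibit explicit paths, exactly as in the worked example: starting from a point in a branch with $l$ null neurons, one can continuously ``turn on'' a null neuron by moving its input weight to some $w'_j$ and letting an output-weight parameter $\delta$ slide from $0$, landing in a branch with fewer null neurons; conversely one can split an active neuron. Chaining these moves connects any two branches, and one checks each intermediate path stays inside the critical set (it does, because at every time the represented function is unchanged and all the per-neuron criticality equations remain satisfied — the active weights are always in $\calM_{\theta'}$, and a null weight parked at any point of $\calM_{\theta'}$ with zero output weight is harmless). The main obstacle I anticipate is the bookkeeping in the first two steps: correctly enumerating the combinatorial types given the degenerate linear-independence cases (zero input weights, parity of $\sigma$) and verifying that the criticality equations really do decouple neuron-by-neuron with no cross terms — in particular that $N$ and $l$ depend only on $m,r$ and the combinatorial type and not on $\theta'$ or the samples. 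Connectedness is conceptually easy but requires care that the connecting paths do not accidentally leave the critical locus.
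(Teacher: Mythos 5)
Your proposal follows essentially the same route as the paper: your combinatorial decomposition by coincidence of input weights is the paper's stratification into the branches $\calC_{P,\pi}^{r,l}$, your neuron-by-neuron decoupling of the criticality equations is exactly how Theorem \ref{AThm Embedding Geometry}(b) identifies each fiber of the reduction map with $\bR^{t_r-r}\times\Pi_{j=1}^l\calM_{\theta'}$ (with $\calM_{\theta'}$ a proper analytic zero set, hence of dimension at most $d-1$, cf.\ Lemma \ref{ALem Dim CalM}), and your connectivity-by-explicit-paths argument is the content of Theorem \ref{AThm Connectivity of branches}. The one slip is the phrase that the $a_k$ of the null group are ``free''---those output weights are forced to vanish, and it is the corresponding $w_k$ that range freely over $\calM_{\theta'}$---but your final assembly of the product structure states this correctly, so the argument is sound.
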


For illustration, consider the example above discussing the set of critical points of a four-neuron network representing a single-neuron network. The branches are precisely $\calC^{1,m-1}, ..., \calC^{1,0}$ as well as those obtained by permuting the indices of the points in these critical sets. Moreover, in Theorem \ref{Thm Branch geometry informal} the $\bR^{N}$ is $\bR^0$, and $\calM_{\theta'}$ is just $\bR \times \{0\}$, i.e., the zero set of 
\[
    \bR^2 \ni w \mapsto \sum_{i=1}^4 (a' e^{w'^{\TT} x_i} - y_i) e^{w^\TT x_i}. 
\]

To systematically study these critical sets we present critical embedding and critical reduction operators in Section \ref{Section Criticality preserving operators}. These maps are output- and criticality-preserving. Intuitively, given a critical point $\theta \in \bR^{(d+1)m}$ with representing a network of width $r$, we can map it by critical reduction operator to a $\theta' \in \bR^{(d+1)r}$ which represent the same network. Then, using embedding operator and solving the equation (in $w$)
\[
    \sum_{i=1}^n e_i(g_r, \theta') \sigma(w^\TT x_i) = 0, 
\]
we are able to determine the critical points in $\bR^{(d+1)m}$ representing the same network as $\theta$ does. A detailed treatment can be found in Theorem \ref{Thm Embedding geometry}. Then we show the branch connectivity in Theorem \ref{Thm Connectivity of branches}. \\

Second, we demonstrate a simple relationship between critical network and saddle existence. 

\begin{thm}[existence of saddles -- informal]
    Following the notations in Theorem \ref{Thm Branch geometry informal}, the set of critical points of $R(g_m, \cdot)$ representing $g_r(\theta', \cdot)$ contains saddles\footnote{a saddle $\theta$ for a differentiable function $f$ is a point with $\nabla f(\theta) = 0$, and $\theta$ is neither a local minimum nor local maximum of it.} whenever $r < m$. 
\end{thm}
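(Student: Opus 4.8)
The plan is to exhibit an explicit curve of critical points through a well-chosen critical point of $R(g_m,\cdot)$ representing $g_r(\theta',\cdot)$, along which the loss is locally constant, and then argue that a generic point on this curve is neither a local minimum nor a local maximum. Since $r < m$, Theorem \ref{Thm Branch geometry informal} tells us that the critical set representing $g_r(\theta',\cdot)$ is a finite union of branches, each isometric to $\bR^N \times \prod_{j=1}^l \calM_{\theta'}$ with $l \ge 1$ (we get at least one genuine ``extra'' factor because $m > r$, so at least one of the $m$ neurons is redundant and its input weight is free to range over $\calM_{\theta'}$, while another free direction comes from splitting an output weight as in the sets $\calC^{1,l}$ of the example). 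First I would isolate, inside one such branch, the null-neuron configuration: take a critical point $\theta$ in which some neuron $k_0$ has output weight $a_{k_0} = 0$ and input weight $w_{k_0} \in \calM_{\theta'}$ arbitrary. Such $\theta$ exists by the characterization of critical sets (it is the analogue of $\calC^{1,m-1}$ in the example), and it lies on the branch, hence is critical with $R(g_m,\theta) = R(g_r,\theta') $ constant along the whole branch.

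Next I would analyze the behavior of $R$ transverse to the branch at such a point. The key observation is that at a null-neuron critical point $\theta$, the Hessian of $R$ has a simple block structure: perturbing only $a_{k_0}$ (keeping everything else fixed) changes the output by $\delta a_{k_0}\,\sigma(w_{k_0}^\TT x)$, and the second-order change in loss is $2\,\delta a_{k_0}^2 \sum_{i=1}^n \sigma(w_{k_0}^\TT x_i)^2 > 0$ whenever the neuron $\sigma(w_{k_0}^\TT x)$ does not vanish on all samples — a positive-definite direction. On the other hand, I would look for a negative or mixed direction coming from jointly perturbing $a_{k_0}$ and $w_{k_0}$. The relevant quadratic form is governed by the bilinear pairing between $\delta a_{k_0}$ and $\delta w_{k_0}$ mediated by the residual $e_i(g_r,\theta')$: specifically the cross term is proportional to $\delta a_{k_0}\, (\delta w_{k_0})^\TT \sum_i e_i(g_r,\theta') \sigma'(w_{k_0}^\TT x_i) x_i$. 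The plan is to choose $w_{k_0} \in \calM_{\theta'}$ so that this gradient vector is nonzero (or, if it is forced to vanish everywhere on $\calM_{\theta'}$, to extract a negative direction from the next order, using that $w_{k_0}$ was a free parameter of a positive-loss critical function); then the $2\times2$ block in $(\delta a_{k_0}, \text{that direction of } \delta w_{k_0})$ has the form $\begin{pmatrix} * & c \\ c & 0\end{pmatrix}$ with $c \ne 0$, which is indefinite. An indefinite Hessian block, together with the fact that $\theta$ is critical, certifies that $\theta$ is a saddle — it is not a local min because of the negative direction of the indefinite block, and not a local max because of the positive $\delta a_{k_0}^2$ direction (or because the branch direction is flat and a neighboring branch point, not being a strict local max, forces non-maximality).

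The main obstacle I anticipate is the degenerate case where, for the particular $\theta'$ and samples, the residual-weighted gradient $\sum_i e_i(g_r,\theta')\sigma'(w^\TT x_i)x_i$ happens to vanish identically for $w$ ranging over $\calM_{\theta'}$, so the second-order cross term is zero and the Hessian block is merely positive semidefinite rather than indefinite. In that situation I would need to go beyond the Hessian: either pass to a higher-order Taylor expansion of $R$ along a curve $t \mapsto (a_{k_0}(t), w_{k_0}(t))$ chosen inside the branch-plus-transverse directions and show the loss strictly decreases in one direction, or invoke the connectivity of branches (Theorem \ref{Thm Connectivity of branches}) to move to a branch — e.g. the fully-split $\calC^{1,0}$-type branch — where the output-weight-splitting directions manifestly produce an indefinite form. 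A secondary point requiring care is ensuring the positive direction genuinely exists, i.e.\ that $\sum_i \sigma(w_{k_0}^\TT x_i)^2 \ne 0$; this follows as long as the samples $x_i$ are not all mapped to a common zero of $\sigma(w_{k_0}^\TT\cdot)$, which we can arrange by the freedom in choosing $w_{k_0} \in \calM_{\theta'}$ and the non-polynomiality of $\sigma$ (Lemma \ref{Asmp Generic activation II}). Once both a flat branch direction, a strictly positive transverse direction, and either a strictly negative transverse direction or a strictly decreasing curve are in hand, the saddle conclusion is immediate.
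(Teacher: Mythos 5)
Your instinct to look at null-neuron critical points (a neuron $k_0$ with $a_{k_0}=0$ and $w_{k_0}\in\calM_{\theta'}$) is exactly where the paper finds its saddles, and such points do exist whenever $r<m$ via the embedding with $l=m-r>0$. However, your argument for why such a point is a saddle has a genuine gap that you yourself flag but do not close: the whole certificate rests on the Hessian block $\begin{pmatrix} * & c^{\TT} \\ c & 0\end{pmatrix}$ in the $(a_{k_0},w_{k_0})$ variables being indefinite, which requires $c=2\sum_i e_i(g_r,\theta')\,\sigma'(w_{k_0}^\TT x_i)\,x_i\neq 0$. But $c$ is precisely the gradient of the defining function of $\calM_{\theta'}$ at $w_{k_0}$, and an analytic function can vanish to second order along its entire zero set (think of $w\mapsto f(w)^2$), so there is no a priori guarantee that any $w_{k_0}\in\calM_{\theta'}$ with $c\neq 0$ exists. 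Your two fallbacks are not proofs: "pass to higher order" is not carried out, and "move to the fully split branch" lands you in a branch with no zero output weights, where saddle-ness requires the non-degeneracy hypotheses of Proposition \ref{Prop Embedding saddles} (non-vanishing of the matrix $B_j$), which need not hold in general. So as written the argument is conditional, while the claim is unconditional.

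The paper's proof of Proposition \ref{Prop Saddle branches} avoids the Hessian entirely and closes this case with a first-order argument you should adopt: since $\dim\calM_{\theta'}\le d-1$, for any $\vep>0$ one can move $w_{k_0}$ to a point $\tw_{k_0}\in B(w_{k_0},\vep)$ that lies \emph{off} $\calM_{\theta'}$. Because $a_{k_0}=0$, this perturbation leaves the output function and hence the loss unchanged, $R(\ttheta)=R(\theta)$; but now $\partial R/\partial a_{k_0}(\ttheta)=2\sum_i e_i\,\sigma(\tw_{k_0}^\TT x_i)\neq 0$, so $\ttheta$ is not critical, and any neighborhood of $\ttheta$ contains points with loss strictly above and strictly below $R(\ttheta)=R(\theta)$. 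These points lie within $2\vep$ of $\theta$, so $\theta$ is neither a local minimum nor a local maximum. This works for every point of every branch with $l>0$, with no non-degeneracy assumption, and is what makes the informal theorem hold whenever $r<m$. Your Hessian computation is still valuable — it is essentially the Fukumizu-style argument the paper uses separately to get \emph{strict} saddles under extra hypotheses — but it cannot by itself carry the unconditional statement.
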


This theorem is summary of Proposition \ref{Prop Embedding saddles} and Proposition \ref{Prop Saddle branches}. In the latter proposition we show that any critical point $\theta^* = (a_k^*, w_k^*)_{k=1}^m$ with $a_k^* = 0$ for some $k$ must be a saddle. Intuitively, this can be shown in the following way: given any such critical point $\theta^*$ and some $k \in \{1, ..., m\}$ with $a_k = 0$, we perturb $w_k$ arbitrarily small to obtain some non-critical point $\ttheta$ with $R(g_m, \theta^*) = R(g_m, \ttheta)$. Since $\nabla R(g_m, \ttheta) \ne 0$, we can find some $\theta$ arbitrarily close to $\ttheta$ with $R(g_m, \theta) < R(g_m, \ttheta) = R(g_m, \theta^*)$. See also Figure \ref{Figure Find a saddle} for illustration. Similarly, there is a $\theta$ arbitrarily close to $\ttheta$ with $R(g_m, \theta) > R(g_m, \ttheta) = R(g_m, \theta^*)$. This shows $\theta^*$ is a saddle. 

\begin{figure}[h]
    \centering
    \includegraphics[width=0.65\textwidth]{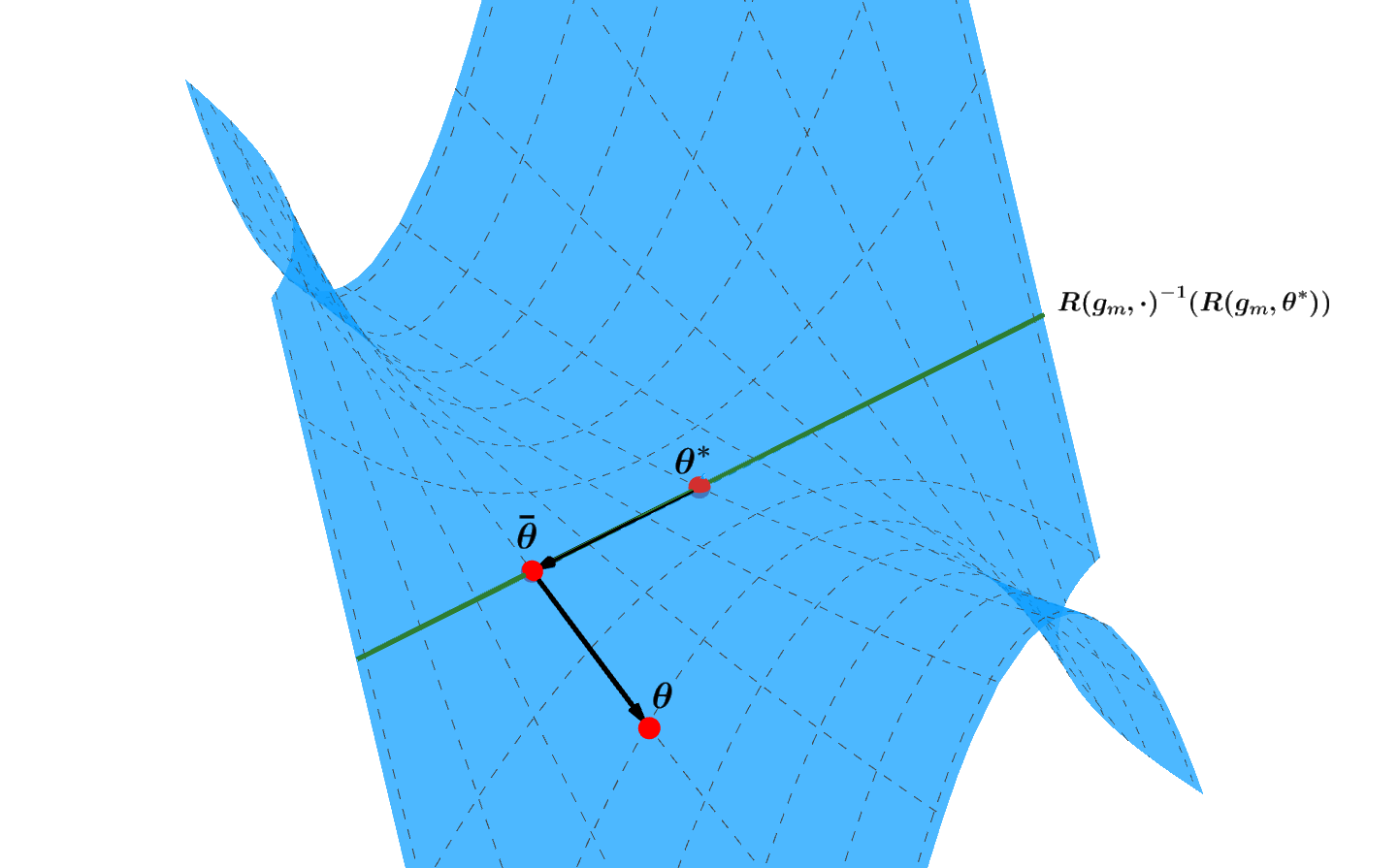}
    \caption{Illustration of our method above to show every point in $\calC^{1,l}$ with $l > 1$ is a saddle. Starting from $\theta^* \in \calC^{1,l}$, we first perturb it to $\ttheta$ with the same loss value as that of $\theta^*$, then, using the fact that $\nabla R(g_m, \theta^*) \ne 0$, we perturb it to a $\theta$ arbitrarily close to $\ttheta$ with $R(g_m, \theta) < R(g_m, \ttheta) = R(g_m, \theta^*)$.}
    \label{Figure Find a saddle}
\end{figure}

\section{Criticality Preserving Operators}\label{Section Criticality preserving operators}

We introduce two criticality preserving operators to help us study the critical sets of loss function.  The first one is the \textit{critical embedding operator}, which is introduced in \cite{EbddPrincipleShort, EbddPrincipleLong}. For completeness and our specific consideration of two-layer neural networks, we rewrite it in Definition \ref{Defn Critical embedding}. The second one, which can be viewed as a ``converse'' of critical embedding, is \textit{critical reduction operator} introduced in Definition \ref{Defn Critical reduction}. Then we show in Proposition \ref{Prop Properties of CE and CR operators} that these operators preserve output function, error terms of loss, and most importantly criticality of loss. \\

First, let's introduce an action on $\bR^{(d+1)m}$ which is induced by the symmetry properties of a neural network model. Given $m \in \bN$, recall that $S_m$ is the set of permutations on $\{1, ..., m\}$.

\begin{defn}[permutation action]
     For each $m \in \bN$ define an action $S_m \times \bR^{(d+1)m} \to \bR^{(d+1)m}$ by 
     \begin{equation}
         \pi \cdot \theta = \left( a_{\pi(k)}, w_{\pi(k)} \right)_{k=1}^m, \quad \forall \, \theta = (a_k, w_k)_{k=1}^m \in \bR^{(d+1)m}. 
     \end{equation}
\end{defn}

Intuitively, each $\pi$ acts on $\bR^{(d+1)m}$ by permuting the indices of $a_k$ and $w_k$'s, so it can be viewed as an orthogonal transformation on $\bR^{(d+1)m}$. Thus, for any $E \siq \bR^{(d+1)m}$, $\pi \cdot E = \{\pi \cdot \theta: \theta \in E\}$ is isometric to $E$. \\

We then define a stratification of parameter space by the number of features. Given $N \in \bN$, a partition of $\{1, ..., N\}$ is a sequence of numbers $P = (t_0, ..., t_r)$ such that $0 = t_0 < t_1 < ... < t_r = N$. \\

\begin{defn}[stratification of parameter space]\label{Defn Stratification of param}
    Given $m, r, l \in \bN$ and $P = (t_0, ..., t_r)$ such that $t_r + l = m$. Define $Q_P^{r,l}$ as the set of points $\theta = (a_k, w_k)_{k=1}^m \in \bR^{(d+1)m}$ such that 
    \begin{itemize}
        \item [(a)] $a_1, ..., a_{t_r} \ne 0$ and $a_{t_r+1} = ... = a_{m} = 0$. 
        \item [(b)] $\sigma\left( w_{t_{j-1}+1}^\TT(\cdot)\right), ..., \sigma\left( w_{t_j}^\TT(\cdot)\right)$ are linearly dependent for all $1 \le j \le r$. 

        \item [(c)]  $\sigma\left( w_{t_j}^\TT(\cdot)\right)$ and $\sigma\left( w_{t_{j'}}^\TT(\cdot)\right)$ are linearly independent for all distinct $j, j' \in \{1, ..., r\}$. 
    \end{itemize}
    Furthermore, define $Q_{P,\pi}^{r,l} := \pi\cdot Q_P^{r,l} = \left\{ \pi \cdot \theta: \theta \in Q_P^{r,l}\right\}$. Then define $Q^{r,l} = \cup_{P,\pi} Q_{P,\pi}^{r,l}$. 
    
    Given $r, l, P, \pi$ as above, we call $r$ the effective feature number, $l$ the ineffective feature number, and $Q_P^{r,l}$, $Q_{P,\pi}^{r,l}$ and $Q^{r,l}$ as branches of $\bR^{(d+1)m}$. When we emphasize the underlying neural network width $m$, we write $Q_P^{r,l}(m)$, $Q_{P,\pi}^{r,l}(m)$ and $Q^{r,l}(m)$, respectively. 
\end{defn}

The requirement (b) and (c) can be made explicitly for input weights. If $\sigma$ has no parity (b) says $w_k = w_{t_j}$ for all $t_{j-1} < k \le t_j$ and all $1 \le j \le r$ and (c) says $w_{t_j} \ne w_{t_{j'}}$ for all distinct $j,j' \in \{1, ..., r\}$. The case for an odd/even $\sigma$ is more complicated: one case for (b) writes $w_k = \pm w_{t_j}$ and for (c) $w_{t_j} \pm w_{t_{j'}} \ne 0$ for distinct $j,j'$; moreover, if $\sigma(0) = 0$ then $w_1, ..., w_{t_r} \ne 0$ when $r > 1$ (because $\sigma(0) = 0$ is linearly dependent with all the neurons $\sigma(w_1^\TT(\cdot)), ..., \sigma(w_{t_r}^\TT (\cdot))$), but when $r = 1$ we may have $w_k = 0$ for some $1 \le k \le t_r$. \\

Then we note that given $\theta = (a_k, w_k)_{k=1}^m \in \bN$, we can find a unique branch $Q_{P,\pi}^{r,l}$ containing $\theta$ in the following way: first count the number of $a_k$'s which are zero; this determines $l$. Second, for each $k$ with $a_k \ne 0$, find all $w_{k'}$'s such that $\sigma(w_k^\TT(\cdot))$ and $\sigma(w_{k'}^\TT (\cdot))$ are linearly dependent. This uniquely divides the set $\{k \in \{1, ..., m\}: a_k \ne 0 \}$ into several groups, thus determining $r$ which is just the number of groups. Then find a permutation $\pi \in S_m$ and a partition $P$ such that $\pi^{-1} \cdot \theta \in Q_P^{r,l}$. It follows that $\theta \in Q_{P,\pi}^{r,l}$. \\

\begin{defn}[critical embedding operator, summary from \cite{EbddPrincipleShort}]\label{Defn Critical embedding}
    Given partition $P = (t_0, ..., t_r)$ as above and an index mapping $\calI: \{1, ..., l\} \to \{1, ..., r\}$. Let $\Delta(P) = (\delta_1, ..., \delta_{t_r})$ be any vector such that $\sum_{k=t_{j-1}+1}^{t_j} \delta_k = 1$ for all $1 \le j \le r$. Then define $\iota_{P,\calI}: \bR^{(d+1)r} \to \bR^{(d+1)m}$ by 
    \begin{equation}
    \begin{aligned}
        \iota_{\Delta(P), \calI}(\theta) = &\left(\delta_1 a_1', w_1', ..., \delta_{t_1} a_1', w_1', ..., \right.\\
        &\left. \delta_{t_{r-1}+1}a_r', w_r', ..., \delta_{t_r}a_r', w_r' \right.\\
        &\left.0, w_{\calI(1)}', ..., 0, w_{\calI(l)}' \right),  
    \end{aligned}
    \end{equation}
    where $\theta = (a_k', w_k')_{k=1}^r$.  Furthermore, define $\iota_{\Delta(P), \calI, \pi}(\theta) = \pi\cdot \iota_{\Delta(P), \calI}$ for any given permutation $\pi \in S_m$. We call both $\iota_{\Delta(P), \calI}$ and $\iota_{\Delta(P), \calI, \pi}$ (two-layer) critical embedding operators. 
\end{defn}

\begin{defn}[critical reduction operator]\label{Defn Critical reduction}
    Given any branch $Q_P^{r,l}$ with $P = (t_0, ..., t_r)$, define $\varphi_P: Q_P^{r,l} \to \bR^{(d+1)r}$ by 
    \begin{equation}
        \varphi_P(\theta) = \left( \sum_{k=t_0+1}^{t_1} a_k, w_{t_1}, ..., \sum_{k=t_{r-1}+1}^{t_r} a_k, w_{t_r} \right), 
    \end{equation}
    where $\theta = (a_k, w_k)_{k=1}^m$. Similarly, define $\varphi_{P,\pi}: Q_{P,\pi}^{r,l} \to \bR^{(d+1)r}$ by $\varphi_{P,\pi}(\theta) = \varphi_P(\pi^{-1} \cdot \theta)$, where the permutation $\pi^{-1}$ is just the inverse of permutation $\pi$. 
\end{defn}

Since $\cup_{P,\pi} \cup_{r,l} Q_{P,\pi}^{r,l} = \bR^{(d+1)m}$, every $\theta \in \bR^{(d+1)m}$ is in the domain in one of such mappings. Moreover, the composition of two critical reduction operators finds the minimal network representing points in $Q_P^{r,l}$, as long as the composition is well-defined. \\

We then present the properties of the two operators. 

\begin{prop}[properties of critical embedding and critical reduction operators]\label{Prop Properties of CE and CR operators}
    Given $r, m \in \bN$ with $r \le m$, denote $g_m$ and $g_r$ as the neural networks of width $m$ and $r$, respectively (see Definition of $g$ in Section \ref{Subsection Preliminaries}). Let $P =(t_0, ..., t_r)$. For any $\theta' \in \bR^{(d+1)r}$ and $\theta \in \bR^{(d+1)m}$, The following results hold for $\iota_{\Delta(P), \calI}(\theta')$ and $\varphi_P$: 
    \begin{itemize}
        \item [(a)] (output function is preserved) $g_r(\theta', x) = g_m(\iota_{\Delta(P), \calI}(\theta'), x)$ and $g_m(\theta, x) = g_r(\varphi_P(\theta), x)$. 
        
        \item [(b)] (error-term is preserved) $e_i(g_r, \theta') = e_i(g_m, \iota_{\Delta(P), \calI}(\theta'))$ and $e_i(g_m, \theta) = e_i(g_r, \varphi_P(\theta))$. 
        
        \item [(c)] (criticality is preserved) $\nabla R(g_r, \theta') = 0$ implies $\nabla R(g_m, \iota_{\Delta(P), \calI}(\theta')) = 0$ and $\nabla R(g_m, \theta) = 0$ implies $\nabla R(g_r, \varphi_P(\theta)) = 0$. 
    \end{itemize}
\end{prop}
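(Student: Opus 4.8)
The plan is to reduce all three claims to one gradient identity. Write $e_i = e_i(g_m,\theta)$ for the error terms at $\theta = (a_k,w_k)_{k=1}^m$ and set $F_\theta(w) := \sum_{i=1}^n e_i(g_m,\theta)\,\sigma(w^\TT x_i)$, a function on $\bR^d$. The chain rule gives $\partial R/\partial a_k = 2F_\theta(w_k)$ and $\partial R/\partial w_k = 2a_k\,\nabla F_\theta(w_k)$, where $\nabla F_\theta(w) = \sum_i e_i(g_m,\theta)\,\sigma'(w^\TT x_i)\,x_i$; hence $\nabla R(g_m,\theta)=0$ is equivalent to ``$F_\theta(w_k)=0$ and $a_k\nabla F_\theta(w_k)=0$ for every $k$''. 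The key observation, used throughout, is that $F_\theta$ depends on $\theta$ only through the error terms and the fixed samples.

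For (a) I would expand the definitions. For $\iota_{\Delta(P),\calI}$, the $j$-th block of split neurons contributes $\left(\sum_{k=t_{j-1}+1}^{t_j}\delta_k\right)a_j'\,\sigma(w_j'^\TT x) = a_j'\,\sigma(w_j'^\TT x)$ by the normalization of $\Delta(P)$, while the $l$ appended neurons have output weight $0$; summing over $j$ gives $g_r(\theta',x)$. For $\varphi_P$ on $Q_P^{r,l}$, condition (b) of Definition \ref{Defn Stratification of param} together with Lemma \ref{Asmp Generic activation II} and the explicit description following it shows that inside block $j$ every neuron $\sigma(w_k^\TT\cdot)$ is a scalar multiple of the representative $\sigma(w_{t_j}^\TT\cdot)$ (the scalar being $1$ when $\sigma$ has no parity and $\pm1$ in the odd/even case); collecting coefficients turns $\sum_{k=t_{j-1}+1}^{t_j}a_k\,\sigma(w_k^\TT x)$ into $\bar a_j\,\sigma(w_{t_j}^\TT x)$ with $\bar a_j$ the matching (signed) sum, and the zero-output-weight neurons disappear. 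Part (b) is then immediate, since $e_i(g,\theta)=g(\theta,x_i)-y_i$ and (a) shows the two output functions agree at every sample point.

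For (c), use (b) to identify the feature functions: $F_{\iota_{\Delta(P),\calI}(\theta')} = F_{\theta'}$ and $F_{\varphi_P(\theta)} = F_\theta$ as maps on $\bR^d$ (writing $F_{\theta'}$ for the width-$r$ feature function). If $\nabla R(g_r,\theta')=0$, then $F_{\theta'}(w_j')=0$ and $a_j'\nabla F_{\theta'}(w_j')=0$ for $j=1,\dots,r$; since every input weight of $\iota_{\Delta(P),\calI}(\theta')$ lies in $\{w_1',\dots,w_r'\}$ all the $\partial R/\partial a$-equations hold, the $\partial R/\partial w$-equation of a split neuron reads $\delta_k a_j'\nabla F_{\theta'}(w_j') = \delta_k\cdot 0 = 0$, and that of an appended neuron reads $0\cdot\nabla F_{\theta'}(\cdot)=0$. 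Conversely, if $\nabla R(g_m,\theta)=0$ with $\theta\in Q_P^{r,l}$, then $F_\theta(w_k)=0$ and $a_k\nabla F_\theta(w_k)=0$ for all $k$; for $\varphi_P(\theta)$ the $\partial R/\partial a$-equations are just $F_\theta(w_{t_j})=0$, and for the $\partial R/\partial w$-equations we note $\nabla F_\theta(w_k) = \pm\nabla F_\theta(w_{t_j})$ within block $j$ (the sign cancels because $\sigma'$ has the parity opposite to $\sigma$), so $a_k\nabla F_\theta(w_{t_j})=0$ for every $k$ in the block and therefore $\bar a_j\nabla F_\theta(w_{t_j})=0$ for whatever signs define $\bar a_j$. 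The claims for $\iota_{\Delta(P),\calI,\pi}$ and $\varphi_{P,\pi}$ then follow because $\pi$ acts by an orthogonal transformation on $\bR^{(d+1)m}$ that fixes $R$ and hence preserves criticality.

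The main obstacle is the $\partial R/\partial w$ part of the reduction direction of (c): deducing the collapsed condition $\bar a_j\nabla F_\theta(w_{t_j})=0$ from the individual conditions $a_k\nabla F_\theta(w_k)=0$ really uses that all neurons in a block are proportional, which is exactly where Lemma \ref{Asmp Generic activation II} and the parity case analysis (and the sign convention implicit in $\varphi_P$) are needed; everything else is substitution.
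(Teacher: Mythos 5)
Your proof is correct and follows essentially the same route as the paper's: both rest on the observation that the operators introduce no new input weights, verify (a) by direct expansion using $\sum_k \delta_k = 1$ and the vanishing of zero-output-weight neurons, obtain (b) immediately from (a), and check (c) coordinate-wise by matching the partial derivatives of $R(g_m,\cdot)$ and $R(g_r,\cdot)$ (your feature function $F_\theta$ is just a repackaging of those derivatives). If anything, you are more careful than the paper in the odd/even-$\sigma$ case, where the paper's computation for $\varphi_P$ silently assumes the input weights within each block coincide rather than merely giving linearly dependent neurons.
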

\begin{proof}
    The proof of results for critical embedding operator can be found in e.g. \cite{EbddPrincipleShort, EbddPrincipleLong}. For completeness we prove the results for both operators in Proposition \ref{AProp Properties of CE and CR operators}. 
\end{proof}

\section{Geometry and Functional Properties of Critical Sets}\label{Section Geometry and functional properties of critical sets}

We are now ready to study the structures of critical sets. In Section \ref{Subsection Geometry of critcal sets}, we use critical embedding and critical reduction operators to characterize the branches $\calC^{r,l}$ of $\critR$ (see Definition \ref{Defn Branches of critR} below for $\calC^{r,l}$), including the covering property, structure of the critical set representing a given output function, the connectivity of these branches, as well as their dimensions. These geometrical properties have a hierarchical dependence on the effective feature number $r$ and ineffective feature number $l$. Based on the characterization of branches, in Section \ref{Subsection Saddle and saddle connectivity} we investigate the relationship between saddle existence and minimal width of output function. We introduce two types of saddle. One is called ``embedding saddles'' obtained directly from applying the critical embedding operator (Proposition \ref{Prop Embedding saddles}). The other one occupies every branch $\calC^{r,l}$ with $l > 0$, categorizing it as a ``saddle branch'' (Proposition \ref{Prop Saddle branches}). \\

First, let's group critical points with non-zero loss value into different ``branches" according to the stratification of parameter space (Definition \ref{Defn Stratification of param}. 

\begin{defn}[branches of critical set]\label{Defn Branches of critR}
    Given $l \le m$, $r \le m - l$ and a partition $P = (t_0, t_1, ..., t_r)$ with $t_0=0, t_r=m-l$, we denote $\calC_P^{r,l}$ (``$\calC$" stands for ``critical") as the subset consisting of critical points $\theta = (a_k, w_k)_{k=1}^m$ in $Q_P^{r,l}$. Furthermore, define $\calC_{P,\pi}^{r,l} := \pi \cdot \calC_P^{r,l}$ for any given permutation $\pi \in S_m$. We call $\calC_P^{r,l}$, $\calC_{P,\pi}^{r,l}$ and $\calC^{r,l}$ (critical) branches of $\critR$. When we emphasize the underlying neural network width $m$, we write $\calC_P^{r,l}(m)$, $\calC_{P,\pi}^{r,l}(m)$ and $\calC^{r,l}$, respectively. 
\end{defn}
\begin{remark}
    By definition, for each permutation $\pi \in S_m$, $\calC_P^{r,l}$ and $\calC_{P,\pi}^{r,l}$ are isometric, indicating that $\calC^{r,l}$ is a finite union of isometric subsets for any given $r$ and $l$. Furthermore, this means to study the $\calC_{P,\pi}^{r,l}$'s we only need to study the $\calC_P^{r,l}$'s. 
\end{remark}

\subsection{Geometry of Critical Sets}\label{Subsection Geometry of critcal sets}

\begin{thm}[embedding geometry]\label{Thm Embedding geometry}
    Given effective feature numbers $r$ and ineffective feature numbers $l$ as in Definition \ref{Defn Branches of critR}, we have the following results. 
     \begin{itemize}
        \item [(a)] (covering) The set of non-global critical points of $R$, $\critR \cut \Rzero$, is the disjoint union of $\calC^{r,l}$'s, i.e., $\critR \cut \Rzero = \cup_{r,l} \calC^{r,l}$. 

        \item [(b)] (embedding structure) For any partition $P = (t_0, ..., t_r)$ such that $t_r + l = m$ and any index mapping $\calI: \{1, ..., l\} \to \{1, ..., r\}$, we have 
        \[
            \iota_{\Delta(P), \calI}(\calC^{r,0}(r)) \siq \calC_P^{r,l}(m). 
        \]
        Conversely, $\varphi_P$ induces a map 
        \[
            \varphi: \calC_P^{r,l}(m) \to \coprod_{r'=0}^r \calC^{r', r-r'}(r) \siq \bR^{(d+1)r}
        \]
        such that the inverse image of every $\theta'$, $\overline{\varphi^{-1}(\theta')}$ is a finite union of sets, each one isometric to $\bR^{t_r-r} \times \coprod_{j=1}^l \calM_{\theta'}$. Here $\calM_{\theta'} \siq \bR^d$ is an analytic set determined by $\theta'$ of dimension at most $d-1$. 
    \end{itemize}
\end{thm}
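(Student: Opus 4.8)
The plan is to prove part (a) by a direct combinatorial decomposition argument, and part (b) by pushing the structure of the embedding and reduction operators through the defining equations of criticality. For part (a), I would start from the observation recorded after Definition \ref{Defn Stratification of param}: every $\theta \in \bR^{(d+1)m}$ lies in exactly one branch $Q_{P,\pi}^{r,l}$, where $l$ counts the zero output weights, $r$ counts the linear-independence classes of the neurons with nonzero output weight, and $P,\pi$ are then determined up to the obvious symmetry. Intersecting with $\critR \cut \Rzero$ and using Definition \ref{Defn Branches of critR}, this gives $\critR \cut \Rzero = \bigcup_{r,l} \calC^{r,l}$; disjointness across distinct pairs $(r,l)$ is immediate since $r$ and $l$ are intrinsic invariants of a point (number of zero output weights, number of independence classes). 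The only subtlety is to confirm that the stratification of $\bR^{(d+1)m}$ by the $Q^{r,l}$ genuinely partitions the whole space, which follows from Lemma \ref{Asmp Generic activation II}: linear (in)dependence of a family of neurons is governed by pairwise (in)dependence, so the ``grouping'' of indices with nonzero output weight into independence classes is well-defined and unambiguous.

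For part (b), the embedding inclusion $\iota_{\Delta(P),\calI}(\calC^{r,0}(r)) \subseteq \calC_P^{r,l}(m)$ is essentially a restatement of Proposition \ref{Prop Properties of CE and CR operators}: if $\theta' \in \calC^{r,0}(r)$ then $\theta'$ is critical for $R(g_r,\cdot)$, so $\iota_{\Delta(P),\calI}(\theta')$ is critical for $R(g_m,\cdot)$ by part (c), it has the same (nonzero) loss value by part (b), and by construction it lies in $Q_P^{r,l}(m)$ — the first $t_r$ output weights are the $\delta_k a_j'$, all nonzero since $\theta' \in \calC^{r,0}$ forces $a_j' \ne 0$ and $\delta_k \ne 0$, the last $l$ are zero, and the neuron structure matches the partition $P$. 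Hence $\iota_{\Delta(P),\calI}(\theta') \in \calC_P^{r,l}(m)$.

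The heart of the matter is the converse: describing $\overline{\varphi^{-1}(\theta')}$. Here $\varphi = \varphi_P$ restricted to $\calC_P^{r,l}(m)$, and by Proposition \ref{Prop Properties of CE and CR operators}(c) it indeed lands in the critical set of $R(g_r,\cdot)$; the target further decomposes as $\coprod_{r'=0}^r \calC^{r',r-r'}(r)$ because $\varphi_P(\theta)$ has exactly $r$ output weights $\sum_{k=t_{j-1}+1}^{t_j} a_k$, some of which may vanish, and its neurons $\sigma(w_{t_j}^\TT(\cdot))$ are pairwise independent by condition (c) of Definition \ref{Defn Stratification of param}. Now fix $\theta' = (a_j', w_j')_{j=1}^r$ in the image. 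A point $\theta = (a_k, w_k)_{k=1}^m \in \calC_P^{r,l}(m)$ maps to $\theta'$ iff: (i) for each block $j \le r$, all $w_k$ with $t_{j-1} < k \le t_j$ are proportional to $w_j'$ (with the proportionality constant $\pm 1$, or exactly $1$ in the no-parity case) and $\sum_{k} a_k = a_j'$; (ii) for the $l$ trailing indices, $a_k = 0$ and the $w_k$ are arbitrary \emph{subject to} criticality of $\theta$. The key computation is that, given $\theta' $ critical for $R(g_r,\cdot)$ and given the block structure forced by (i), the residual gradient equations for $\theta$ reduce exactly to: the $w$-gradient in each zero-output-weight coordinate vanishes iff $\sum_{i=1}^n e_i(g_r,\theta')\, a_k \nabla\sigma(w_k^\TT x_i) = 0$, which — since $a_k=0$ — is automatic, while the $a$-gradient in those coordinates gives $\sum_{i=1}^n e_i(g_r,\theta')\,\sigma(w_k^\TT x_i) = 0$; and within each block the equations are those already satisfied because $\theta'$ is critical. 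So the trailing input weights $w_k$ range over $\calM_{\theta'} := \{w \in \bR^d : \sum_{i=1}^n e_i(g_r,\theta')\sigma(w^\TT x_i) = 0\}$, which is an analytic set (zero set of a real-analytic, non-identically-zero function since $R(g_r,\theta')>0$), and within each block the $t_j - t_{j-1} - 1$ free parameters $\delta_k$ with $\sum \delta_k = 1$ give an affine space of dimension $t_r - r$ in total; collecting the $\pm 1$ sign choices and the choice of index mapping gives the ``finite union.'' Thus $\overline{\varphi^{-1}(\theta')}$ is a finite union of sets isometric to $\bR^{t_r - r} \times \coprod_{j=1}^l \calM_{\theta'}$, and $\dim\calM_{\theta'} \le d-1$ follows because a non-identically-zero real-analytic function on $\bR^d$ has zero set of dimension at most $d-1$.

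I expect the main obstacle to be the bookkeeping in the converse direction: verifying carefully that the gradient $\nabla R(g_m,\theta)=0$ decouples into exactly the block-internal equations (automatically satisfied by criticality of $\theta'$ via Proposition \ref{Prop Properties of CE and CR operators}) plus the single scalar equation defining $\calM_{\theta'}$ per trailing index, with no hidden coupling between blocks or between the $\delta_k$ and the $w_k$. This requires writing out $\partial_{a_k} R$ and $\partial_{w_k} R$ explicitly and using $e_i(g_m,\theta) = e_i(g_r,\theta')$ (Proposition \ref{Prop Properties of CE and CR operators}(b)) together with the nonpolynomiality of $\sigma$ to handle the degenerate case where some $w_k$ in a block could coincide or vanish — Lemma \ref{Asmp Generic activation II} and its corollaries are what control those edge cases. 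The ``taking closure'' in $\overline{\varphi^{-1}(\theta')}$ is presumably to absorb boundary strata where some $\delta_k \to 0$; I would handle this last, noting that the closure of the affine slice $\{\sum\delta_k = 1\}$ intersected with the open condition $\delta_k \ne 0$ is the whole affine space, which is what makes the factor exactly $\bR^{t_r-r}$ rather than an open subset.
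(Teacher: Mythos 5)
Your proposal is correct and follows essentially the same route as the paper's proof: part (a) via the uniqueness of the stratum $Q_{P,\pi}^{r,l}$ containing a given point, the inclusion in (b) via Proposition \ref{Prop Properties of CE and CR operators}, and the fiber description via exactly the paper's three conditions (block sums of output weights, block-constant input weights, and the trailing input weights constrained to the zero set $\calM_{\theta'}$ of $w \mapsto \sum_i e_i(\theta')\sigma(w^\TT x_i)$ because $\partial R/\partial a_k = 0$ is the only nontrivial equation when $a_k = 0$), with the closure recovering the full affine factor $\bR^{t_r-r}$.
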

\begin{proof}
    See the proof of \ref{AThm Embedding Geometry} in Appendix. 
\end{proof}
\begin{remark}
    When $\sigma$ is also an odd function, $\calM_{\theta'}$ is always an analytic set of dimension $d-1$. See Lemma \ref{ALem Dim CalM}. 
\end{remark}

By Theorem \ref{Thm Embedding geometry} (a), we can see that to study the critical points of $R$ (with non-zero loss), it suffices to study all the branches $\calC^{r,l}$'s. Then (b) tells us each such branch has a simple, hierarchical structure depending on the effective feature number $r$ and ineffective feature number $l$. In fact, more can be deduced from (b). First, for every critical point $\theta$, the set $\calC_{\theta}$ of critical points representing the same output function as $\theta$ is a finite union of sets taking the form $\overline{\varphi_{P',\pi'}^{-1}(\varphi_{P,\pi}(\theta))}$'s. More precisely, if $\theta \in \calC_{P,\pi}^{r,l}$ and $\theta' := \varphi_{P,\pi}(\theta)$,  
\begin{equation}
    \bigcup_{l=0}^{m-r} \bigcup_{P',\pi'}  \overline{\varphi_{P',\pi'}^{-1}(\theta')} = \calC_{\theta} = \{\ttheta \in \critR: g(\ttheta, \cdot) = g(\theta, \cdot)\},  
\end{equation}
so $\calC_{\theta'}$ is a finite union of sets, each one isometric to a Euclidean space product several identical analytic set\footnote{An analytic set is the common zero set of a finite collection of (real) analytic functions.} of dimension at most $d-1$. \\

Second, (b) helps us analyze the dimension of each branch $\calC^{r,l}$. Given effective feature number $r \le m$, let $N := \dim (\nabla R(g_r, \cdot))^{-1}(0)$ be the dimension of the critical points for width-$r$ network $g_r$. From Theorem \ref{Thm Embedding geometry} (b), we can see that for each ineffective feature number $l$, $\dim \calC^{r,l} \le (m-l-r) + N + l(d-1)$ and in particular when $\sigma$ is an odd activation satisfying assumption \ref{Asmp Generic activation II}, $\dim \calC^{r,l} = (m-l-r) + N + l(d-1)$. Thus, when $d=1$, $\dim \calC^{r,l}$ decreases as $l$ increases; when $d=2$ and $\sigma$ is odd, $\dim \calC^{r,l_1} = \dim \calC^{r,l_2}$ as long as both sets are non-empty; when $d > 2$ and $\sigma$ is odd, $\dim \calC^{r,l}$ increases as $l$ increases. See also figure \ref{Figure Stratification Cr}

\begin{thm}[connectivity of branches]\label{Thm Connectivity of branches}
   The following results hold for branches of $\critR$ defined as in Definition \ref{Defn Branches of critR}. 
    \begin{itemize}
        \item [(a)] Given effective feature number $r_1$ and a partition $P_1 = (t_0, ..., t_{r_1})$ with $t_{r_1} + l_1 = m$. Let $s_1$ count the number of elements in the set $\{j: t_j - t_{j-1} > 0\}$. If the ineffective feature numbers $l_1, l_2$ satisfy $l_1 < l_2$, $r_1 + l_1 < m$ and $r_2 + l_2 \le m$, then there are partitions $P_{2,1}, ..., P_{2,s_1}$ and permutations $\pi_{2,1}, ..., \pi_{2, s_1}$ such that $\overline{\calC_{P_1}^{r_1,l_1}} \cap \calC_{P_{2,j}, \pi_{2,j}}^{r_1, l_2} \ne \emptyset$ for all $1 \le j \le s_1$. 

        \item [(b)] Suppose the effective feature numbers $r_1, r_2$ with $r_1 < r_2$. Given a partition $P_1$ as in (a), then there are ineffective feature numbers $l_1, l_2$, a partition $P_2$ and a permutation $\pi_2$ giving $\calC_{P_1}^{r_1,l_1} \cap \overline{\calC_{P_2, \pi_2}^{r_2,l_2}} \ne \emptyset$, if the following requirements are satisfied. 
        \begin{itemize}
            \item [i)]    $l_1 > 0$. 
            \item [ii)]   $2r_2 - r_1 + l \le m$. 
            \item [iii)]  For some $\theta \in \calC_{P_1}^{r_1,l_1}$ the common zero set of the $(d+1)$-many maps 
            \begin{align*}
                &w \mapsto \sum_{i=1}^n e_i(\theta) \sigma(w^\TT x_i) \\ 
                &w \mapsto \sum_{i=1}^n e_i(\theta) \sigma'(w^\TT x_i) (x_i)_1 \\ 
                &\,\, \vdots \\
                &w \mapsto \sum_{i=1}^n e_i(\theta) \sigma'(w^\TT x_i) (x_i)_d 
            \end{align*}
            has at least $r_2$-many elements. 
        \end{itemize}
    \end{itemize}
\end{thm}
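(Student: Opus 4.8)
The plan is to prove both parts by exhibiting, by hand, a critical point in the target branch together with a continuous family of points inside $\critR$ connecting it to the source branch; the closure in each assertion is then realized as a limit of that family. The tools are Proposition~\ref{Prop Properties of CE and CR operators} (embedding and reduction preserve the output function, every $e_i$, and criticality), Lemma~\ref{Asmp Generic activation II} (linear (in)dependence of neurons is pairwise-checkable, so clauses (b)--(c) of Definition~\ref{Defn Stratification of param} are easy to track), and the gradient formulas $\partial_{a_k}R=2\sum_i e_i(\theta)\sigma(w_k^\TT x_i)$, $\partial_{w_k}R=2a_k\sum_i e_i(\theta)\sigma'(w_k^\TT x_i)x_i$. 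Write $\calM_\theta$ for the zero set of $w\mapsto\sum_i e_i(\theta)\sigma(w^\TT x_i)$ and $Z_\theta\subseteq\calM_\theta$ for the common zero set of the $d+1$ maps in hypothesis~(b)(iii). These formulas say exactly that $\theta=(a_k,w_k)_{k=1}^m\in\critR$ iff $w_k\in\calM_\theta$ for all $k$ and $w_k\in Z_\theta$ whenever $a_k\ne0$. From this one reads off three criticality-preserving moves, each leaving the output function (hence $e_i$, $\calM_\theta$, $Z_\theta$) fixed: (M1) for collinear neurons $w_k,w_{k'}$, replace $(a_k,a_{k'})$ by $(a_k+u,a_{k'}-u)$; (M2) move the input weight of an inactive neuron ($a_k=0$) freely within $\calM_\theta$; (M3) insert or delete a canceling pair $(a,v),(-a,v)$ with $v\in Z_\theta$. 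Any concatenation of these stays in $\critR$; only the branch of Definition~\ref{Defn Stratification of param} changes.

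For part~(a), assume $\calC^{r_1,l_2}\ne\emptyset$. Take $\theta\in\calC_{P_1}^{r_1,l_1}$ and, for each $j\in\{1,\dots,s_1\}$, pick active neurons to deactivate: $b_i$ of them from group $i$, with $\sum_i b_i=l_2-l_1$ and $b_i\le t_i-t_{i-1}-1$, giving priority to group $j$. Such $b_i$ exist because $r_1+l_1<m$ and $r_1+l_2\le m$ force $\sum_i(t_i-t_{i-1}-1)=m-l_1-r_1\ge l_2-l_1$. Now form the family of points obtained from $\theta$ by using (M1) inside each group to set the output weights of the chosen neurons to parameters $\epsilon_\nu\ne0$; this family lies in $\calC_{P_1}^{r_1,l_1}$ for small nonzero $\epsilon_\nu$ (same output function, same grouping, all $a$'s nonzero), and as $\epsilon_\nu\to0$ it converges to a point $\theta_j^*$ in which those neurons are inactive, i.e. $\theta_j^*\in\calC^{r_1,l_2}$. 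Hence $\theta_j^*\in\overline{\calC_{P_1}^{r_1,l_1}}\cap\calC^{r_1,l_2}$; taking $\calC_{P_{2,j},\pi_{2,j}}^{r_1,l_2}$ to be the branch containing $\theta_j^*$ and varying $j$ over the $s_1$ priority choices produces the required pairs $(P_{2,j},\pi_{2,j})$.

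For part~(b), choose $l_1>0$ (condition (i)) and $l_2$ with $l_1=l_2+2(r_2-r_1)$, a compatible partition $P_1$, and a witness $\theta\in\calC_{P_1}^{r_1,l_1}$ as in~(iii), so $|Z_\theta|\ge r_2$. The $r_1$ group weights $w_{t_1},\dots,w_{t_{r_1}}$ already lie in $Z_\theta$ and are pairwise independent, so $Z_\theta$ supplies $r_2-r_1$ further weights $v_1,\dots,v_{r_2-r_1}$ extending them to a pairwise-independent family of size $r_2$. Form $\ttheta$ of width $m$ with the output function of $\theta$: keep the $r_1$ original groups (sizes $t_j-t_{j-1}$, same effective weights), append $r_2-r_1$ ``phantom'' groups, the $s$-th being $(\epsilon_s,v_s),(-\epsilon_s,v_s)$ with $\epsilon_s\ne0$, and place inactive neurons (weights in $\calM_\theta$) in the remaining $l_2$ slots. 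By (M3) and the criterion, $\ttheta\in\critR$, it has the output function of $\theta$ and effective feature number $r_2$, and fitting it needs $(\ge r_1)+2(r_2-r_1)+l_2\le m$ slots, i.e. hypothesis~(ii). Now rescale the phantom weights $(\epsilon_s,-\epsilon_s)\mapsto(\tau\epsilon_s,-\tau\epsilon_s)$, $\tau\in(0,1]$: the output function is unchanged, so this path lies in $\calC_{P_2,\pi_2}^{r_2,l_2}$ for $\tau>0$, and at $\tau=0$ the phantom neurons become inactive, giving a point $\theta^\dagger$ of effective feature number $r_1$, with the output function of $\theta$, group sizes matching $P_1$, and $l_1$ inactive neurons. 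Thus $\theta^\dagger\in\calC_{P_1}^{r_1,l_1}\cap\overline{\calC_{P_2,\pi_2}^{r_2,l_2}}$.

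The conceptual content --- the three moves, and for (b) the fact that a point of $Z_\theta$ can carry a canceling phantom pair without leaving $\critR$ --- is short. The main obstacle is bookkeeping: at every stage one must re-verify all of Definition~\ref{Defn Stratification of param} (the zero/nonzero pattern of the $a_k$, collinearity within each group, independence across groups --- here the parity-case remarks after Definition~\ref{Defn Stratification of param} and keeping the $v_s$ pairwise independent and independent from the $w_{t_j}$ enter), and then match the constructed limit points to the exact labels $P_{2,j},\pi_{2,j}$ and $l_1,l_2,P_2,\pi_2$, and to the count $s_1$. A helpful simplification is that a size-$1$ group of any point of $\calC_{P_1}^{r_1,l_1}$ necessarily has nonzero effective weight, so realizing the prescribed group sizes of $P_1$ inside $\ttheta$ never costs an extra neuron.
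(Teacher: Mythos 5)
Your proposal is correct and follows essentially the same route as the paper: part (a) is proved by sending output weights within a group to zero while preserving their sum (a limit of points in $\calC_{P_1}^{r_1,l_1}$ landing in a higher-$l$ branch), and part (b) by inserting canceling neurons at common zeros of the $d+1$ gradient-component maps and letting their output weights tend to zero. The only substantive difference is that your part (a) distributes the deactivations across groups to realize an arbitrary admissible $l_2$, whereas the paper's construction collapses a single group and thus only exhibits the specific value $l_2 = l_1 + (t_j - t_{j-1} - 1)$; in that respect your argument matches the stated theorem more closely.
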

\begin{proof}
    See the proof of Theorem \ref{AThm Connectivity of branches} in Appendix. 
\end{proof}

\begin{figure}[h]
    \centering
    \includegraphics[width = 0.85\textwidth]{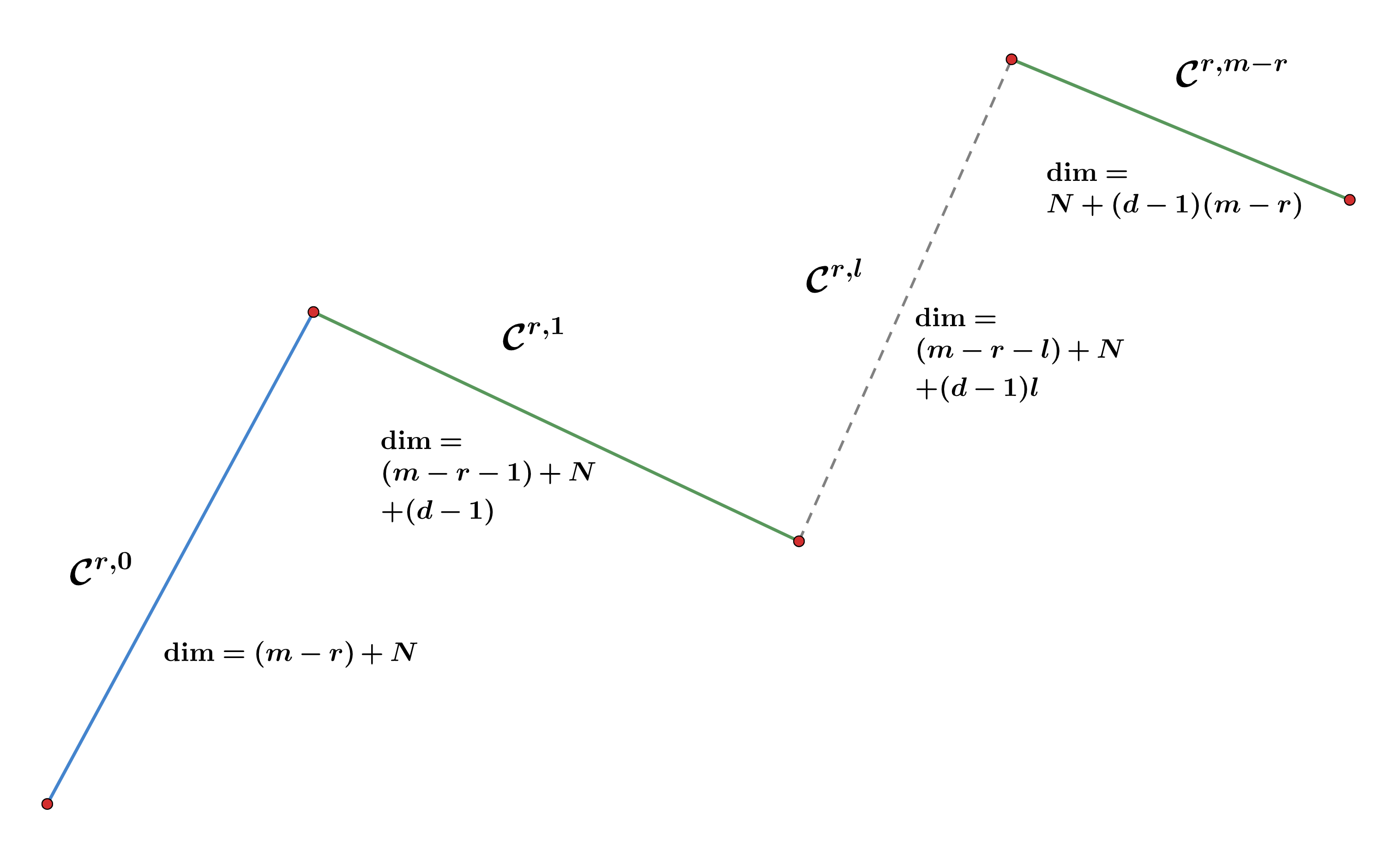}
    \caption{As illustrated by the figure, $\calC^{r,0}, \calC^{r,1}, ..., \calC^{r,m-r}$ are connected to one another, and when $\sigma$ for each $1 \le l \le m$ the branch $\calC^{r,l}$ is an analytic set of dimension $(m-l-r) + N + (d-1)$ with $N = \dim (\nabla R(g_r, \cdot))^{-1}(0)$. Furthermore, each \green{$\calC^{r,l}$ with $1 \le l \le m-r$ consist only of saddles}, while the branch \textcolor{blue}{$\calC^{r,0}$} has strict saddles provided that the hypothesis in \ref{Prop Embedding saddles} is satisfied. } 
    \label{Figure Stratification Cr}. 
\end{figure}

\subsection{Saddle and Saddle Connectivity}\label{Subsection Saddle and saddle connectivity}

\begin{prop}[Embedding saddles]\label{Prop Embedding saddles}
    Let $r < m$. Suppose that $\theta' = (a_k', w_k')_{k=1}^r \in \calC_{P^*}^{r,0}(r)$ is a critical point of $R$ and the matrices
    \[
        \sum_{i=1}^n e_i(\theta') \sigma''(w_j'^\TT x_i) 
        \maThree{(x_i)_1(x_i)_1}{...}{(x_i)_1(x_i)_d}{\vdots}{\ddots}{\vdots}{(x_i)_d(x_i)_1}{...}{(x_i)_d(x_i)_d} 
    \]
    and $\sum_{i=1}^n \sigma(w_l'^\TT x_i)^2$ are non-zero for some $j$ and $l$. Then there is a partition $P = (t_0, ..., t_r = m)$ such that $\calC_P^{r,0}(m)$ contains strict saddle points $\theta = (a_k, w_k)_{k=1}^m$ satisfying $g(\theta, \cdot) = g(\theta', \cdot)$. 
\end{prop}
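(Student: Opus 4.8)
The plan is to produce the strict saddle as a critical embedding of $\theta'$ with a carefully chosen splitting vector, and then to exhibit at that point one direction of strictly negative curvature and one of strictly positive curvature for $\nabla^2 R$. Since $r<m$, fix a partition $P=(t_0,\dots,t_r=m)$ whose block corresponding to neuron $j$ has at least two indices, i.e.\ $t_j-t_{j-1}\ge 2$, and a splitting vector $\Delta(P)=(\delta_1,\dots,\delta_m)$ all of whose entries are non-zero. Set $\theta:=\iota_{\Delta(P),\calI}(\theta')$ (with $l=0$ the index map $\calI$ is vacuous). By Theorem~\ref{Thm Embedding geometry}(b) and Proposition~\ref{Prop Properties of CE and CR operators}, $\theta\in\calC_P^{r,0}(m)$, $\theta$ is critical, $g(\theta,\cdot)=g(\theta',\cdot)$, and $e_i(g_m,\theta)=e_i(g_r,\theta')=:e_i$ for all $i$. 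It remains to choose the splitting coefficients of block $j$ and two perturbation directions witnessing that $\theta$ is a strict saddle.

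\emph{Negative curvature.} Let the two copies of neuron $j$ inside $\theta$ carry input weight $w_j'$ and output weights $a_1=\delta a_j'$ and $a_2=(1-\delta)a_j'$ with $\delta\notin\{0,1\}$ (so $a_1,a_2\neq 0$, using $a_j'\neq 0$ from $\theta'\in\calC_{P^*}^{r,0}(r)$). For $u\in\bR^d$, let $\gamma$ be the curve that moves these two input weights to $w_j'+tu$ and $w_j'-t(a_1/a_2)u$ and fixes all other coordinates. A Taylor expansion in $t$ shows the first-order change of the network output cancels, since $a_1u+a_2\cdot(-(a_1/a_2)u)=0$, so
\[
 e_i(\gamma(t))=e_i+\tfrac{t^2}{2}\,\tfrac{a_1a_j'}{a_2}\,(u^\TT x_i)^2\,\sigma''(w_j'^\TT x_i)+O(t^3).
\]
As $\gamma$ is affine in every coordinate, $\gamma''(0)=0$, hence
\[
 \gamma'(0)^\TT\nabla^2 R(\theta)\,\gamma'(0)=\frac{d^2}{dt^2}\Big|_{t=0}R(\gamma(t))=\frac{2a_1a_j'}{a_2}\;u^\TT\!\left(\sum_{i=1}^n e_i\,\sigma''(w_j'^\TT x_i)\,x_ix_i^\TT\right)\!u .
\]
By hypothesis the symmetric matrix $M_j:=\sum_i e_i\,\sigma''(w_j'^\TT x_i)\,x_ix_i^\TT$ is non-zero for this $j$, so there is $u\neq 0$ with $u^\TT M_ju\neq 0$; its sign is fixed, but $a_1a_j'/a_2=a_j'\,\delta/(1-\delta)$ takes both signs as $\delta$ ranges over $\bR\setminus\{0,1\}$ (and $a_1,a_2$ stay non-zero, so $\theta$ remains in $\calC_P^{r,0}(m)$). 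Choosing $\delta$ so that $\tfrac{2a_1a_j'}{a_2}\,u^\TT M_ju<0$ yields, since $\gamma'(0)\neq 0$, a direction of strictly negative curvature; in particular $\theta$ is a strict saddle.

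\emph{Positive curvature.} Perturb by $s$ the output weight of a single copy of neuron $l$ (any copy; its block may be a singleton), fixing everything else. Then $g$ changes by $s\,\sigma(w_l'^\TT x)$, so along this line
\[
 R=R(\theta)+2s\sum_{i=1}^n e_i\,\sigma(w_l'^\TT x_i)+s^2\sum_{i=1}^n\sigma(w_l'^\TT x_i)^2 ;
\]
the linear coefficient is $\partial_{a_l'}R(g_r,\cdot)$ evaluated at $\theta'$, which vanishes because $\theta'$ is critical, so this coordinate direction has curvature $2\sum_i\sigma(w_l'^\TT x_i)^2>0$. Hence $\nabla^2 R(\theta)$ is indefinite, so $\theta$ is neither a local minimum nor a local maximum, which completes the argument.

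The only delicate step is the negative-curvature computation: one must check that the splitting perturbation annihilates the gradient-order term so that $\gamma'(0)^\TT\nabla^2R(\theta)\gamma'(0)$ is exactly the quadratic form of the hypothesized matrix $M_j$, and then exploit the sign freedom of the scalar $a_1a_j'/a_2$ — which does not affect membership of $\theta$ in $\calC_P^{r,0}(m)$ — to force the curvature to be negative whatever the sign of $u^\TT M_ju$. All expansions are of analytic functions of a single real variable over finitely many samples, so they are legitimate; what remains is routine bookkeeping of indices through the embedding $\iota_{\Delta(P),\calI}$.
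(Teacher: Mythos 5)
Your proposal is correct and follows essentially the same route as the paper's proof (the Fukumizu-style splitting argument): embed $\theta'$ by splitting neuron $j$ into a block of size at least two, extract negative curvature from the non-vanishing matrix $M_j=\sum_i e_i\,\sigma''(w_j'^\TT x_i)x_ix_i^\TT$ in the input-weight directions of that block, and extract positive curvature from $\sum_i\sigma(w_l'^\TT x_i)^2>0$ in an output-weight coordinate. The only difference is cosmetic: the paper perturbs a single input weight and tunes the splitting coefficient $a_{t_j}$ so that the quadratic $a_{t_j}^2|\sqrt{A_j}u|^2+a_{t_j}u^\TT M_ju$ is negative, whereas you use a balanced two-weight perturbation that cancels the positive-semidefinite Gauss--Newton term outright and then tune $\delta$ -- both are valid and rest on the same hypotheses.
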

\begin{proof}
    See the proof of Proposition \ref{AProp Embedding saddles}
\end{proof}

\begin{prop}[Saddle branches]\label{Prop Saddle branches}
    Let $r < m$. Any branch $\calC^{r,l}$ with $l > 0$ consists only of saddle points. Moreover, for any $\theta_0 \in \calC^{r,0}$, $\theta_0$ is connected to a saddle $\theta_1$ via a line segment $\gamma: [0,1] \to \bR^{(d+1)m}$, such that $\gamma(t) \in \critR$ and $g(\gamma(t), \cdot) = g(\theta_0, \cdot)$ for all $t$. 
\end{prop}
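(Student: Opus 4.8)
The plan is to prove the two assertions in turn, reducing the second to the first. The first assertion says every point of $\calC^{r,l}$ with $l>0$ is a saddle; I would prove it by the perturbation argument sketched in Section \ref{Subsection Illust of main results}. For the second, given $\theta_0\in\calC^{r,0}$ I would build an explicit line segment that keeps the output function and criticality fixed while driving one output weight to $0$, so that it ends in a branch $\calC^{r',l'}$ with $l'\ge 1$, to which the first assertion applies.

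\emph{First assertion.} Fix $\theta^*=(a_k^*,w_k^*)_{k=1}^m\in\calC^{r,l}$ with $l>0$, choose $k$ with $a_k^*=0$, and note $R(\theta^*)>0$ since $\calC^{r,l}\siq\critR\cut\Rzero$. Put $\phi(w):=\sum_{i=1}^n e_i(\theta^*)\,\sigma(w^\TT x_i)$, an analytic function on $\bR^d$. The one nontrivial input is that $\phi\not\equiv 0$: this follows from the standing hypotheses on the samples together with the non-polynomial analyticity of $\sigma$ (linear independence of the feature maps $w\mapsto\sigma(w^\TT x_i)$, Lemma \ref{Asmp Generic activation II}), since otherwise all $e_i(\theta^*)$ would vanish, contradicting $R(\theta^*)>0$. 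As $\phi$ is analytic and not identically zero on the connected domain $\bR^d$, every neighborhood of $w_k^*$ contains a point $\tw$ with $\phi(\tw)\ne0$. Let $\ttheta$ be $\theta^*$ with $w_k^*$ replaced by $\tw$. Because $a_k^*=0$, the $k$-th neuron contributes nothing, so $g(\ttheta,\cdot)=g(\theta^*,\cdot)$, hence $e_i(\ttheta)=e_i(\theta^*)$ and $R(\ttheta)=R(\theta^*)$; but $\partial_{a_k}R(\ttheta)=2\sum_i e_i(\ttheta)\sigma(\tw^\TT x_i)=2\phi(\tw)\ne0$, so $\nabla R(\ttheta)\ne0$. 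Moving from $\ttheta$ a small amount along $+\nabla R(\ttheta)$ and along $-\nabla R(\ttheta)$ gives points with $R$ strictly greater and strictly smaller than $R(\ttheta)=R(\theta^*)$; taking $\tw$ close enough to $w_k^*$ (and then the step small) keeps these inside any preassigned neighborhood of $\theta^*$. So $\theta^*$ is neither a local minimum nor a local maximum, i.e., a saddle.

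\emph{Second assertion.} Let $\theta_0=(a_k,w_k)_{k=1}^m\in\calC^{r,0}$ with $r<m$; then every $a_k\ne0$, and since the $m$ active neurons fall into only $r<m$ linear-dependence classes, two of them, say indices $1$ and $2$ after an isometric permutation, have $\sigma(w_1^\TT(\cdot))$ and $\sigma(w_2^\TT(\cdot))$ linearly dependent; I will take $w_1=w_2=:w$ (the case $w_1=-w_2$ with $\sigma$ odd is identical up to signs). Define $\gamma(t)$, $t\in[0,1]$, to agree with $\theta_0$ in all coordinates except $a_1\mapsto(1-t)a_1$ and $a_2\mapsto a_2+ta_1$. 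Since $a_1(t)+a_2(t)\equiv a_1+a_2$ and $w_1=w_2$, the output is unchanged: $g(\gamma(t),\cdot)=g(\theta_0,\cdot)$ and $e_i(\gamma(t))=e_i(\theta_0)$. With $\phi_0(w)=\sum_i e_i(\theta_0)\sigma(w^\TT x_i)$, the identities $\partial_{a_j}R=2\phi_0(w_j)$ and $\partial_{w_j}R=2a_j\nabla\phi_0(w_j)$ together with criticality of $\theta_0$ (use $j=1$ and $a_1\ne0$) give $\phi_0(w)=0$ and $\nabla\phi_0(w)=0$; substituting these into the same identities at $\gamma(t)$ for $j=1,2$, and observing that every other partial derivative of $R$ depends only on the $e_i$ and on unchanged coordinates, yields $\nabla R(\gamma(t))=0$ for all $t$. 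Thus $\gamma$ is a segment inside $\critR$ along which the output function is constant. Finally $\theta_1:=\gamma(1)$ has $a_1=0$ and $R(\theta_1)=R(\theta_0)>0$, so by Theorem \ref{Thm Embedding geometry}(a) it lies in some $\calC^{r',l'}$ with $l'\ge1$; by the first assertion $\theta_1$ is a saddle, finishing the proof.

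\emph{Main obstacle.} Apart from the bookkeeping above, the single delicate point is the claim $\phi\not\equiv0$ in the first assertion: this is precisely where the standing hypotheses on $\sigma$ and the samples are indispensable (the same hypotheses that force $\dim\calM_{\theta'}\le d-1$ in Theorem \ref{Thm Embedding geometry}), and one should check that without them a point of $\calC^{r,l}$ with $l>0$ could fail to be a saddle. I would therefore invoke that hypothesis explicitly at the one place it is used; everything else is gradient computation and a routine neighborhood estimate.
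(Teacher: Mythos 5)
Your proposal is correct and follows essentially the same route as the paper: the first assertion is proved by perturbing an inactive input weight off the zero set of $w\mapsto\sum_i e_i(\theta^*)\sigma(w^\TT x_i)$ (nonvanishing of which is exactly the $\dim\calM_{\theta'}\le d-1$ fact the paper invokes from Theorem \ref{Thm Embedding geometry}) to reach a non-critical point of equal loss, and the second by a line segment that transfers output weight within one linear-dependence class until a coefficient hits zero. The only cosmetic difference is that you merge two neurons where the paper collapses an entire group at once, and you verify criticality along the segment more explicitly than the paper does.
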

\begin{proof}
    See the proof of Proposition \ref{AProp Saddle branches}. See also Figure \ref{Figure Stratification Cr} for illustration. 
\end{proof}

\section{Conclusion}

In this paper we characterize the geometry of critical sets and the existence of saddle branches of a neural network. The analysis is based on a stratification of parameter space according to the width of output functions they represent. First we present two criticality preserving maps, namely the critical embedding operator and critical reduction operator. Then we categorize the critical points with non-zero loss value into branches according the stratification of parameter space, and characterize the geometry of these branches with the two operators. We uncover some of their simple geometry, meanwhile showing the covering property, connectivity and dimensions of them. Finally we prove that whenever the minimal width of such output function is smaller than that of our model, the critical set representing it always has saddle branches, and often strict saddles as well. In general, the paper provides a foundation for more detailed investigation of neural network optimization and training behaviors. Future works may explore which branches are generically non-empty/empty as well as how the structures of these branches impact gradient dynamics. \\

\section{Acknowledgement}

This work is sponsored by the National Key R\&D Program of China Grant No. 2022YFA1008200 (T. L., Y. Z.), the National Natural Science Foundation of China Grant No. 12101401 (T. L.), No. 12101402 (Y. Z.),  Shanghai Municipal Science and Technology Key Project No. 22JC1401500 (T. L.), the Lingang Laboratory Grant No.LG-QS-202202-08 (Y. Z.), Shanghai Municipal of Science and Technology Major Project No. 2021SHZDZX0102.

\bibliography{main}

\clearpage
\appendix

\section{Appendix}

\begin{lemma}[Lemma \ref{Asmp Generic activation II}]\label{ALem Generic activation II}
    Let $\sigma: \bR \to \bR$ be an analytic non-polynomial. Then for any $d \in \bN$, $m \ge 2$ and any $w_1, ..., w_m \in \bR^d$, the neurons $\sigma(w_1^\TT x), ..., \sigma(w_m^\TT x)$ are linearly independent if and only if every two of them are linearly independent. 
\end{lemma}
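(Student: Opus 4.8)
The ``only if'' direction is immediate: any subfamily of a linearly independent family of functions is again linearly independent, so in particular every pair is. Hence the whole content is the converse, and the plan is to show that if the neurons are pairwise linearly independent then a relation $\sum_{k=1}^{m} c_k\,\sigma(w_k^{\TT}x) = 0$, holding for all $x\in\bR^d$, forces $c_1 = \cdots = c_m = 0$.

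First I would extract from pairwise independence the structure of the input weights. Writing $\sigma(s) = \sum_{n\geq 0} a_n s^n$ near $0$ and comparing homogeneous components shows that $\sigma(w_i^{\TT}x)$ and $\sigma(w_j^{\TT}x)$ can be linearly dependent only if $w_i = w_j$, or $w_i = -w_j$ with $\sigma$ even or odd, or one of $w_i,w_j$ vanishes with $\sigma(0)=0$. Thus, under the hypothesis, the $w_k$ are pairwise distinct; moreover $w_i\neq -w_j$ whenever $\sigma$ has a parity, and no $w_k$ vanishes when $\sigma(0)=0$, while at most one $w_k$ vanishes otherwise. Consequently the finitely many nonzero linear functionals $v\mapsto(w_i-w_j)^{\TT}v$, together with $v\mapsto(w_i+w_j)^{\TT}v$ when $\sigma$ has a parity and $v\mapsto w_k^{\TT}v$ when $\sigma(0)=0$, cannot cover $\bR^d$, so I may fix a direction $v$ for which the numbers $\mu_k := w_k^{\TT}v$ are pairwise distinct, pairwise distinct in absolute value if $\sigma$ has a parity, and all nonzero except possibly one.

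Next I would restrict the relation to the line $x = tv$, obtaining $\sum_{k=1}^{m} c_k\,\sigma(\mu_k t) = 0$ for all $t$. Expanding $\sigma$ in power series again and matching Taylor coefficients (valid on $|t|<\rho_0/\max_k|\mu_k|$, with $\rho_0>0$ the radius of convergence of $\sigma$ at $0$, and $\max_k|\mu_k|>0$ since $m\geq 2$) converts this into the moment identities $\sum_{k=1}^{m} c_k\,\mu_k^{\,n} = 0$ for every $n$ in $S := \{n : a_n\neq 0\}$, which is infinite because $\sigma$ is non-polynomial. I would then run the standard ``largest modulus dominates'' argument: split $S$ into its even and odd parts, at least one infinite, use that those exponents tend to infinity to peel off the $\mu_k$ of maximal absolute value, and iterate; the single possible zero among the $\mu_k$ occurs only when $\sigma(0)\neq 0$, in which case it enters only the $n=0$ equation and is handled last. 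This yields $c_1 = \cdots = c_m = 0$.

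The step I expect to be the real obstacle is this last one when $\sigma$ has no parity. Then coincidences $\mu_i = -\mu_j$ are allowed, and the even-exponent identities control only the sums $\sum_{|\mu_k|=\rho} c_k$, so separating such a $\pm$-pair forces one to use the odd-exponent identities; the argument then depends delicately on how the even and odd exponents are distributed in $S$, i.e.\ on the parity structure of the Taylor coefficients of $\sigma$. Making this bookkeeping rigorous is where essentially all the work lies, and I would want to check carefully whether non-polynomiality of $\sigma$ alone suffices or whether one needs a sharper condition, such as non-polynomiality of both the even and the odd part of $\sigma$.
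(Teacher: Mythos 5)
Your overall strategy coincides with the paper's: the ``only if'' direction is trivial, and for the converse one restricts the relation to a line $x=tv$ with $v$ chosen to separate the weights, Taylor-expands $\sigma$, and runs the ``largest modulus dominates'' argument on the exponents $s$ with $\sigma^{(s)}(0)\neq 0$. For $\sigma$ even or odd this is exactly the paper's argument, and for $\sigma$ with both infinitely many nonzero even and infinitely many nonzero odd Taylor coefficients the paper simply cites \cite{BSimsek} (there the $w_k$ must be pairwise distinct and no $\pm$-coincidences cause trouble).

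However, the step you flag as ``the real obstacle'' --- $\sigma$ without parity but with possible coincidences $\mu_i=-\mu_j$ --- is a genuine gap, and in fact it cannot be closed: your suspicion that non-polynomiality of $\sigma$ alone does not suffice is correct. Take $d=1$, $\sigma(u)=u^3+e^{u^2}$ (analytic, non-polynomial, no parity) and weights $w_1=1$, $w_2=-1$, $w_3=2$, $w_4=-2$. Each pair of neurons is linearly independent (the functions $u^3$, $e^{u^2}$, $e^{4u^2}$ are linearly independent, and any relation between two neurons forces its coefficients into that span), yet
\[
  -8\,\sigma(x)+8\,\sigma(-x)+\sigma(2x)-\sigma(-2x)=(-8-8+8+8)\,x^3+(-8+8)\,e^{x^2}+(1-1)\,e^{4x^2}=0,
\]
so the four neurons are linearly dependent. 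The obstruction is precisely that the odd part of $\sigma$ is a polynomial, so the odd-exponent moment identities you would need to separate a $\pm$-pair are only finitely many. Note that the paper's own proof treats only the two regimes (i) both the even and the odd Taylor parts of $\sigma$ are non-polynomial and (ii) $\sigma$ has a parity; it silently omits the regime your counterexample lives in. So your bookkeeping worry is not a defect of your write-up relative to the paper --- it is an accurate diagnosis that the lemma as stated needs a stronger hypothesis (e.g.\ that neither the even nor the odd part of $\sigma$ is a polynomial, or that $\sigma$ is even or odd), and any completed proof must assume it.
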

\begin{proof}
    First assume that $\sigma^{(s)}(0) \ne 0$ for infinitely many even and odd integer $s$'s, and $\sigma(0) \ne 0$, then by \cite{BSimsek}, we know that for any $m,d \in \bN$ and any $w_1, ..., w_m \in \bR^d$, $\sigma(w_1^\TT x), ...., \sigma(w_m^\TT x)$ are linearly independent if and only if $w_1, ..., w_m$ are distinct; in particular, if and only if every two of them are linearly independent. A similar argument holds when $\sigma(0) = 0$, except that now $\sigma(0x) = 0$ and thus we need to take $w_k = 0$ into consideration as well. \\

    Second, assume that $\sigma$ is an even or odd function, then there are even numbers $\{s_j\}_{j=1}^\infty \siq \bN$ such that $\sigma^{(s_j)}(0) \ne 0$ for all $j \in \bN$. Given $w_1, ..., w_m \in \bR^d$, suppose that $\sigma(w_1^\TT x), ..., \sigma(w_m^\TT x)$ are linearly dependent, we must claim that either $w_k \pm w_j = 0$ for some distinct $k, j \in \{1, ..., m\}$ or when $\sigma(0) = 0$, $w_k = 0$ for some $k \in \{1, ..., m\}$. Indeed, if this is not true, there is a vector $v \in \bR^d$ such that $w_k^\TT v \pm w_j^\TT v \ne 0$ for all distinct $k, j \in \{1, ..., m\}$ and when $\sigma(0) = 0$, $w_k^\TT v \ne 0$ for all $k$. Since $\sigma(w_1^\TT x), ..., \sigma(w_m^\TT x)$ are linearly independent, there are constants $a_1, ..., a_m \in \bR$ not all zero, such that $\sum_{k=1}^m a_k \sigma(w_k^\TT x) = 0$ for all $x \in \bR^d$. Therefore, 
    \[
        \sum_{k=1}^m a_k \sigma((w_k^\TT v) z) = 0, \quad \forall\, z \in \bR
    \]
    Rewriting this in power series expansion near the origin, we obtain 
    \begin{equation}\label{eq 1 for ALem Generic activation II}
        \sum_{k=1}^m a_k \sigma((w_k^\TT v) z) = \sum_{s=0}^\infty \alpha_s \left( \sum_{k=1}^m a_k (w_k^\TT v)^s \right) z^s = 0
    \end{equation}
    for all $z$ sufficiently close to $0$, where $\alpha_s = \sigma^{(s)}(0)$ for each $s \in \bN \cup \{0\}$. Let $k_1 \in \{1, ..., m\}$ be such that $|w_{k_1}^\TT v| > |w_k^\TT v|$ for all $k$. If $a_{k_1} \ne 0$, we have 
    \[
        \sum_{k=1}^m a_k \left( w_k^\TT v \right)^s = \Theta ( w_{k_1}^\TT v)^s \to \infty 
    \]
    as $s \to \infty$; in particular, $\alpha_{s_j} \sum_{k=1}^m a_k (w_k^\TT v)^{s_j} \ne 0$ for sufficiently large $j$, which contradicts equation (\ref{eq 1 for ALem Generic activation II}). Thus, $a_{k_1} = 0$. By repeating this procedure we can see that $a_1 = ... = a_m = 0$, contradicting our assumption on the linear dependence of these neurons. Therefore, we conclude that if $\sigma(w_1^\TT x), ..., \sigma(w_m^\TT x)$ are linearly dependent, either 
    \begin{itemize}
        \item [(a)] $w_k \pm w_j = 0$ for some distinct $k, j \in \{1, ..., m\}$, which means the neurons $\sigma(w_k^\TT x)$ and $\sigma(w_j^\TT x)$ are linearly dependent; or 
        \item [(b)] $w_k = 0$ for some $k$ provided that $\sigma(0) = 0$, which means the constant-zero neuron $\sigma(w_k^\TT x)$ is linearly independent with any other neurons. \\
    \end{itemize}

    Conversely, if $\sigma(w_1^\TT x), ..., \sigma(w_m^\TT x)$ are linearly independent, then trivially every two of them are linearly independent. This completes the proof. 
\end{proof}

\begin{prop}[Proposition \ref{Prop Properties of CE and CR operators}]\label{AProp Properties of CE and CR operators}
    Given $r, m \in \bN$ with $r \le m$, denote $g_m$ and $g_r$ as the neural networks of width $m$ and $r$, respectively, i.e., 
    \begin{align*}
        g_m(\theta, x) &= \sum_{k=1}^m a_k \sigma(w_k^\TT x), \quad \forall\,\theta = (a_k, w_k)_{k=1}^m \in \bR^{(d+1)m}, \forall\,x \in \bR^d \\ 
        g_r(\theta', x) &= \sum_{k=1}^r a_k' \sigma(w_k'^{\TT} x), \quad \forall\,\theta' = (a_k', w_k')_{k=1}^r \in \bR^{(d+1)r}, \forall\, x\in \bR^d
    \end{align*}
    Let $P =(t_0, ..., t_r)$. For any $\theta' \in \bR^{(d+1)r}$ and $\theta \in \bR^{(d+1)m}$, The following results hold for $\iota_{\Delta(P), \calI}(\theta')$ and $\varphi_P$: 
    \begin{itemize}
        \item [(a)] (output function is preserved) $g_r(\theta', x) = g_m(\iota_{\Delta(P), \calI}(\theta'), x)$ and $g_m(\theta, x) = g_r(\varphi_P(\theta), x)$. 
        
        \item [(b)] (error-term is preserved) $e_i(g_r, \theta') = e_i(g_m, \iota_{\Delta(P), \calI}(\theta'))$ and $e_i(g_m, \theta) = e_i(g_r, \varphi_P(\theta))$. 
        
        \item [(c)] (criticality is preserved) $\nabla R(g_r, \theta') = 0$ implies $\nabla R(g_m, \iota_{\Delta(P), \calI}(\theta')) = 0$ and $\nabla R(g_m, \theta) = 0$ implies $\nabla R(g_r, \varphi_P(\theta)) = 0$. 
    \end{itemize}
\end{prop}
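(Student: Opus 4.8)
The plan is to reduce both operators to their effect on the error vector $(e_i)_{i=1}^n$ and to encode criticality of a two-layer network through one auxiliary function of a single weight vector. I would dispatch parts (a) and (b) first, as they are direct computations. For $\iota_{\Delta(P),\calI}$, grouping the defining sum of $g_m(\iota_{\Delta(P),\calI}(\theta'),x)$ by the blocks of $P$ rewrites it as $\sum_{j=1}^r\bigl(\sum_{k=t_{j-1}+1}^{t_j}\delta_k\bigr)a_j'\,\sigma((w_j')^{\TT}x)$ plus a sum over the last $l$ neurons whose output weights are $0$; since $\sum_{k=t_{j-1}+1}^{t_j}\delta_k=1$, this is $g_r(\theta',x)$. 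For $\varphi_P$ on $Q_P^{r,l}$ the same grouping gives $g_r(\varphi_P(\theta),x)=\sum_{j=1}^r\bigl(\sum_{k=t_{j-1}+1}^{t_j}a_k\bigr)\sigma(w_{t_j}^{\TT}x)$, which equals $g_m(\theta,x)$ once one knows that in each block every $\sigma(w_k^{\TT}(\cdot))$ coincides with $\sigma(w_{t_j}^{\TT}(\cdot))$; that is exactly what Lemma \ref{Asmp Generic activation II} extracts from the in-block linear dependence required in Definition \ref{Defn Stratification of param}, verbatim for activations without parity and after the harmless replacement $(a_k,w_k)\mapsto(-a_k,-w_k)$ for odd activations. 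Part (b) is then immediate, since $e_i(g,\theta)=g(\theta,x_i)-y_i$ depends on $\theta$ only through the output function, which (a) preserves.

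For (c), I would first record the elementary formulas for the gradient of $R(g_m,\cdot)=\sum_{i=1}^n e_i(\cdot)^2$: at $\theta=(a_k,w_k)_{k=1}^m$ one has $\partial_{a_k}R=2\,F_\theta(w_k)$ and $\partial_{w_k}R=2\,a_k\,\nabla_w F_\theta(w_k)$, where
\[
F_\theta(w):=\sum_{i=1}^n e_i(\theta)\,\sigma(w^{\TT}x_i),\qquad \nabla_w F_\theta(w)=\sum_{i=1}^n e_i(\theta)\,\sigma'(w^{\TT}x_i)\,x_i .
\]
Thus $\theta$ is critical for $R(g_m,\cdot)$ if and only if $F_\theta(w_k)=0$ and $a_k\,\nabla_w F_\theta(w_k)=0$ for every $k$, and the same characterization holds for $g_r$. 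The crucial point, given by part (b), is that the error vector is unchanged by both operators, so $F_{\iota_{\Delta(P),\calI}(\theta')}=F_{\theta'}$ and $F_{\varphi_P(\theta)}=F_\theta$ as functions of $w$; criticality then transfers once one keeps track of which input weights and which output weights appear on each side.

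Concretely, in the embedding direction, if $\theta'=(a_k',w_k')_{k=1}^r$ is critical then $F_{\theta'}(w_j')=0$ and $a_j'\,\nabla_w F_{\theta'}(w_j')=0$ for all $j$; every input weight occurring in $\iota_{\Delta(P),\calI}(\theta')$ is one of the $w_j'$, so $F$ vanishes there, the $w$-gradient condition for a duplicated neuron reads $\delta_k a_j'\,\nabla_w F_{\theta'}(w_j')=\delta_k\bigl(a_j'\,\nabla_w F_{\theta'}(w_j')\bigr)=0$, and for each of the last $l$ neurons both partials vanish because the output weight is $0$ and $F$ vanishes at $w_{\calI(s)}'$. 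In the reduction direction, take $\theta\in Q_P^{r,l}$ critical; here I would invoke the defining property of the stratification that $a_k\neq 0$ for $k\le t_r$, so $a_k\,\nabla_w F_\theta(w_k)=0$ forces $\nabla_w F_\theta(w_k)=0$, and since every $w_k$ in block $j$ equals $w_{t_j}$ we get $\nabla_w F_\theta(w_{t_j})=0=F_\theta(w_{t_j})$; hence $\bigl(\sum_{k=t_{j-1}+1}^{t_j}a_k\bigr)\nabla_w F_\theta(w_{t_j})=0$, which is precisely criticality of $\varphi_P(\theta)$ at its $j$-th neuron. The hard part is not a deep idea but disciplined bookkeeping: matching the two operators' effect on weights and output weights, the parity-activation sign normalization in (a), and---above all, in the reduction direction---using the non-vanishing of the output weights built into $Q_P^{r,l}$, without which $a_k\,\nabla_w F_\theta(w_k)=0$ would not yield $\nabla_w F_\theta(w_k)=0$.
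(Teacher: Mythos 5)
Your proposal is correct and follows essentially the same route as the paper's proof: parts (a)--(b) by regrouping the sum over the blocks of $P$ and using $\sum_{k=t_{j-1}+1}^{t_j}\delta_k=1$, and part (c) by observing that both operators preserve the error vector and introduce no new input weights, so the gradient conditions transfer block by block. The only substantive divergence is in the reduction direction of (c), where you cancel $a_k\neq 0$ to get $\nabla_w F_\theta(w_{t_j})=0$ while the paper instead shows $\partial_{(w_j)_t}R(g_r,\cdot)(\varphi_P(\theta))=\sum_{k=t_{j-1}+1}^{t_j}\partial_{(w_k)_t}R(g_m,\cdot)(\theta)$ and sums the vanishing partials (so it never needs the nonvanishing of the output weights); both are valid on the domain $Q_P^{r,l}$, and your explicit handling of the sign normalization for odd activations is a point the paper's computation glosses over.
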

\begin{proof}
    The key to proving the results for both operators is that they do not ``introduce new input weights''. The three parts (a), (b), (c) then follows from straightforward computation. Let $\theta' =: (a_j', w_j')_{j=1}^r$ and $\theta =: (a_k, w_k)_{k=1}^m$. 
    \begin{itemize}
        \item [(a)] For $\iota_{\Delta(P), \calI}$ we use the fact that $\sum_{k=t_{j-1}+1}^{t_j} \delta_k = 0$ for all $1 \le j \le r$ to deduce that for any $x \in \bR^d$, 
        \begin{align*}
            g_r(\theta', x) 
            &= \sum_{j=1}^r a_j' \sigma(w_k'^{\TT} x) \\
            &= \sum_{j=1}^r \left( \sum_{k=t_{j-1}+1}^{t_j} \delta_k \right) a_j' \sigma(w_k'^{\TT} x) \\ 
            &= \sum_{j=1}^r \left[ \delta_{t_{j-1}+1}a_j' \sigma(w_j'^{\TT} x) + ... + \delta_{t_j}a_j' \sigma(w_j'^{\TT} x) \right] \\ 
            &= g_m(\iota_{\Delta(P), \calI}(\theta'), x). 
        \end{align*}
        Similarly, for $\varphi_P$ we use the fact that any neuron with zero output weight does not affect the output of the neural network to deduce that for any $x \in \bR^d$, 
        \begin{align*}
            g_m(\theta, x) = \sum_{k=1}^m a_k \sigma(w_k^\TT x)
            &= \sum_{k=1}^{t_r} a_k \sigma(w_k^\TT x) \\ 
            &= \sum_{j=1}^r \left( \sum_{k=t_{j-1}+1}^{t_j} a_k \right) \sigma(w_{t_j}^\TT x) \\ 
            &= g_r(\varphi_P(\theta), x). 
        \end{align*}

        \item [(b)] By (a), $g_r(\theta', x_i) - y_i = g_m(\iota_{\Delta(P), \calI}(\theta'), x_i) - y_i$ and $g_m(\theta, x_i) - y_i = g_r(\varphi_P(\theta), x_i) - y_i$ for all $1 \le i \le n$, so the desired result follows. 

        \item [(c)] For $\iota_{\Delta(P), \calI}(\theta')$: denote $(a_k'', w_k'')_{k=1}^m = \iota_{\Delta(P), \calI}(\theta')$, given $1 \le k \le m$, $w_k'' = w_j'$ and $a_j'' = \delta a_j$ for some $1 \le j \le r$ and $\delta \in \bR$, whence by (b), 
        \begin{align*}
            \parf{R(g_m, \cdot)}{a_k}(\iota_{\Delta(P), \calI}(\theta')) 
            &= 2 \sum_{i=1}^n \left( g_m(\iota_{\Delta(P), \calI}(\theta'), x_i) - y_i \right) \sigma(w_k''^{\TT}x_i) \\ 
            &= 2 \sum_{i=1}^n (g_r(\theta', x_i) - y_i) \sigma(w_j'^{\TT} x_i) = \parf{R(g_r, \cdot)}{a_j}
        \end{align*}
        and for each $1 \le t \le d$, 
        \begin{align*}
            \parf{R(g_m, \cdot)}{(w_k)_t}(\iota_{\Delta(P), \calI}(\theta')) 
            &= 2 a_k'' \sum_{i=1}^n \left( g_m(\iota_{\Delta(P), \calI}(\theta'), x_i) - y_i \right) \sigma'(w_k''^{\TT}x_i)(x_i)_t \\ 
            &= 2 \delta a_j \sum_{i=1}^n (g_r(\theta', x_i) - y_i) \sigma'(w_j'^{\TT} x_i)(x_i)_t \\ 
            &= \parf{R(g_r, \cdot)}{a_j}. 
        \end{align*}
        From these equations we can see that $\nabla R(g_r, \theta') = 0$ implies $\nabla R(g_m, \iota_{\Delta(P), \calI}(\theta') = 0$. 
        
        Similarly, for $\varphi_P(\theta)$: denote $(a_j'', w_j'')_{j=1}^r = \varphi_P(\theta)$, given $1 \le j \le r$, $w_j'' = w_{t_j}$, whence by (b), 
        \begin{align*}
            \parf{R(g_r, \cdot)}{a_j}(\varphi_P(\theta)) 
            &= 2 \sum_{i=1}^n \left( g_r(\varphi_P(\theta), x_i) - y_i\right) \sigma(w_j''^{\TT} x_i) \\ 
            &= 2 \sum_{i=1}^n (g_m(\theta, x_i) - y_i) \sigma(w_{t_j}^\TT x_i) \\ 
            &= \parf{R(g_m, \cdot)}{a_{t_j}}
        \end{align*}
        and for each $1 \le t \le d$, 
        \begin{align*}
            \parf{R(g_r, \cdot)}{(w_j)_t}(\varphi_P(\theta)) 
            &= 2 a_j'' \sum_{i=1}^n \left( g_r(\varphi_P(\theta), x_i) - y_i\right) \sigma'(w_j''^{\TT} x_i) \\ 
            &= 2 \sum_{k=t_{j-1}}^{t_j} a_k \sum_{i=1}^n (g_m(\theta, x_i) - y_i) \sigma'(w_{t_j}^\TT x_i) \\ 
            &= \sum_{k=t_{j-1}+1}^{t_j} \parf{R(g_m, \cdot)}{(w_k)_t}. 
        \end{align*}
        From these equations we can see that $\nabla R(g_m, \theta) = 0$ implies $\nabla R(g_r, \varphi_P(\theta)) = 0$.
    \end{itemize}
\end{proof}

\begin{thm}[Theorem \ref{Thm Embedding geometry}]\label{AThm Embedding Geometry}
    Given effective feature numbers $r, r_1, r_2$ and ineffective feature numbers $l, l_1, l_2$ as in Definition \ref{Defn Branches of critR}, we have the following results. 
     \begin{itemize}
        \item [(a)] (covering) The set of non-global critical points of $R$, $\critR \cut \Rzero$, is the disjoint union of $\calC^{r,l}$'s, i.e., $\critR \cut \Rzero = \cup_{r,l} \calC^{r,l}$. 

        \item [(b)] (embedding structure) For any partition $P = (t_0, ..., t_r)$ such that $t_r + l = m$ and any index mapping $\calI: \{1, ..., l\} \to \{1, ..., r\}$, we have 
        \[
            \iota_{\Delta(P), \calI}(\calC^{r,0}(r)) \siq \calC_P^{r,l}(m). 
        \]
        Conversely, $\varphi_P$ induces a map 
        \[
            \varphi: \calC_P^{r,l}(m) \to \coprod_{r'=0}^r \calC^{r', r-r'}(r) \siq \bR^{(d+1)r}
        \]
        such that the inverse image of every $\theta'$, $\overline{\varphi^{-1}(\theta')}$ is a finite union of sets, each one isometric to $\bR^{t_r-r} \times \Pi_{j=1}^l \calM_{\theta'}$. Here $\calM_{\theta'} \siq \bR^d$ is an analytic set determined by $\theta'$ of dimension at most $d-1$. 
    \end{itemize}
\end{thm}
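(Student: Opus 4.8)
I would first note that $\Rzero\siq\critR$ (a zero of $R$ has all error terms vanishing, hence $\nabla R=0$), so $\critR\cut\Rzero$ is exactly the set of critical points with strictly positive loss. Then I would invoke the observation made after Definition \ref{Defn Stratification of param} that every $\theta=(a_k,w_k)_{k=1}^m$ lies in exactly one branch $Q_{P,\pi}^{r,l}$, where $l$ counts the vanishing output weights and $r$ counts the proportionality classes among the neurons with $a_k\ne0$ (well-defined since, by Lemma \ref{Asmp Generic activation II}, ``linearly dependent'' reduces to pairwise proportionality). Since the $Q^{r,l}$ are thus pairwise disjoint over $(r,l)$, and $\calC^{r,l}$ is by definition the set of positive-loss critical points in $Q^{r,l}$, the decomposition $\critR\cut\Rzero=\coprod_{r,l}\calC^{r,l}$ follows immediately. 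For the embedding inclusion in (b) I would take $\theta'\in\calC^{r,0}(r)$ and a partition $P$ with $t_r+l=m$, apply Proposition \ref{Prop Properties of CE and CR operators} to get that $\iota_{\Delta(P),\calI}(\theta')$ is a critical point of $R(g_m,\cdot)$ with the same positive loss, and then check conditions (a)--(c) of Definition \ref{Defn Stratification of param} for $Q_P^{r,l}$: taking $\Delta(P)$ with nonzero entries (as the statement intends), its first $t_r$ output weights are nonzero, its last $l$ vanish, each block of input weights is constantly $w_j'$, and $w_1',\dots,w_r'$ are pairwise non-proportional.

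\textbf{Setting up $\varphi$ and unwinding its fibers.} Next I would fix $P=(t_0,\dots,t_r)$ and, for $\theta\in\calC_P^{r,l}(m)$, use Proposition \ref{Prop Properties of CE and CR operators} to see $\varphi_P(\theta)$ is a positive-loss critical point of $R(g_r,\cdot)$; with $r'$ the number of nonzero block-sums $\sum_{k=t_{j-1}+1}^{t_j}a_k$, the point $\varphi_P(\theta)$ has $r'$ effective and $r-r'$ ineffective features (the $w_{t_j}$ being pairwise non-proportional), so it lies in $\calC^{r',r-r'}(r)$; this defines $\varphi:\calC_P^{r,l}(m)\to\coprod_{r'=0}^r\calC^{r',r-r'}(r)$. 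For fixed $\theta'=(a_j',w_j')_{j=1}^r$ in the image, a point $\theta\in\calC_P^{r,l}(m)$ with $\varphi_P(\theta)=\theta'$ must have $w_{t_j}=w_j'$ and $\sum_{k=t_{j-1}+1}^{t_j}a_k=a_j'$ for each $j$; membership in $Q_P^{r,l}$ then forces every within-block input weight to equal $w_j'$ --- up to a sign, or equal to $0$, when $\sigma$ has parity or $\sigma(0)=0$, which is where several pieces appear --- and leaves the within-block output weights constrained only by those sum conditions plus $a_k\ne0$.

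\textbf{Criticality and the isometry.} The main step is then to show that, under these structural constraints, $\theta$ is critical if and only if every trailing input weight $w_k$, $t_r<k\le m$, lies in $\calM_{\theta'}:=\{w\in\bR^d:\sum_{i=1}^n e_i(g_r,\theta')\sigma(w^\TT x_i)=0\}$. Using the gradient formulas from the proof of Proposition \ref{Prop Properties of CE and CR operators}, the $a$-partials of $R(g_m,\cdot)$ at $\theta$ in the effective blocks reproduce the vanishing $a$-partials of $R(g_r,\cdot)$ at $\theta'$; the $w$-partials in the effective blocks are nonzero multiples of $\sum_i e_i(g_r,\theta')\sigma'(w_j'^\TT x_i)(x_i)_t$, which vanish because $\theta'=\varphi_P(\theta_0)$ for some $\theta_0\in\calC_P^{r,l}(m)$ whose $w$-partials in nonzero-weight directions are zero; the trailing $w$-partials vanish automatically since the corresponding output weights are $0$; and the trailing $a$-partials are $2\sum_i e_i(g_r,\theta')\sigma(w_k^\TT x_i)$, giving the stated condition. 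Consequently $\varphi^{-1}(\theta')$ is a finite union --- over the sign/zero choices above --- of products of $\{(a_k)_{k\le t_r}:\sum_{\text{block }j}a_k=a_j',\ a_k\ne0\}$ with $\Pi_{j=1}^l\calM_{\theta'}$; the closure of the first factor is an affine subspace of dimension $\sum_j(t_j-t_{j-1}-1)=t_r-r$, and since the two factors occupy orthogonal coordinate blocks, each piece of $\overline{\varphi^{-1}(\theta')}$ is isometric to $\bR^{t_r-r}\times\Pi_{j=1}^l\calM_{\theta'}$. Finally $\calM_{\theta'}$ is an analytic set (the zero locus of one real-analytic function of $w$), and a proper subset of $\bR^d$ --- so $\dim\calM_{\theta'}\le d-1$ --- because that function is not identically zero: for distinct sample inputs the neurons $\sigma(w^\TT x_i)$ are linearly independent in $w$, so its vanishing would force all $e_i(g_r,\theta')=0$, against positivity of the loss.

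\textbf{Expected obstacle.} I expect the delicate part to be the fiber analysis: criticality of $\theta$ is \emph{not} implied by criticality of $\theta'$ alone, but also needs $\sum_i e_i(g_r,\theta')\sigma'(w_j'^\TT x_i)(x_i)_t=0$ for all $j,t$, which is automatic only on the stratum $r'=r$ (where $a_j'\ne0$) and otherwise has to be imported from the assumption that $\theta'$ is in the image of $\varphi$. Getting the combinatorics of the non-generic activation cases right, so that $\overline{\varphi^{-1}(\theta')}$ is genuinely a finite union of the stated model pieces, and arguing $\dim\calM_{\theta'}\le d-1$ cleanly, are the other points requiring care.
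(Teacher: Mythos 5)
Your proposal is correct and follows essentially the same route as the paper's proof: part (a) from the uniqueness of the stratum $Q_{P,\pi}^{r,l}$ containing a given point, the embedding inclusion from Proposition \ref{Prop Properties of CE and CR operators}, and the fiber of $\varphi$ characterized by fixed block sums of output weights, fixed (up to parity) input weights in the effective blocks, and trailing input weights constrained to $\calM_{\theta'}$, yielding the isometry with $\bR^{t_r-r}\times\Pi_{j=1}^l\calM_{\theta'}$. You are in fact more careful than the paper on two points it glosses over: the vanishing of the $w$-partials in effective blocks when a block sum $a_j'$ is zero (which, as you say, must be imported from $\theta'$ lying in the image of $\varphi$ rather than from criticality of $\theta'$ alone), and the non-triviality of $w\mapsto\sum_i e_i(\theta')\sigma(w^\TT x_i)$ needed for $\dim\calM_{\theta'}\le d-1$ (which does require the sample inputs to be pairwise non-degenerate, an assumption the paper leaves implicit).
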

\begin{proof}
\begin{itemize}
    \item [(a)] According to the remark after the definition of stratification of $\bR^{(d+1)m}$ (Definition \ref{Defn Stratification of param}), for any critical point $\theta \in \critR$, there is an effective feature number $r$ and ineffective feature number $l$ with $\theta \in Q^{r,l}$. Thus, if we also have $R(\theta) \ne 0$ then $\theta \in \calC^{r,l}$. This shows the covering property of these branches. 
    \item [(b)] The fact that $\iota_{\Delta(P), \calI}(\calC^{r,0}) \siq \calC_P^{r,l}$ follows from Proposition \ref{Prop Properties of CE and CR operators}. 
    
    For the converse, fix any $\theta \in \calC_P^{r,l}$ and denote 
    \[
        \theta' = (a_j', w_j')_{j=1}^r = \varphi(\theta).
    \]
    By Proposition \ref{Prop Properties of CE and CR operators}, $\theta'$ is also a critical point of $R(g', \cdot)$. Moreover, since the input weights of $\theta'$ are distinct, $\theta' \in Q^{r', r-r'}(r)$ for some $0 \le r' \le r$ (in fact, if $\theta = (a_k, w_k)_{k=1}^m$, $r'$ is the number of $j$'s such that $\sum_{k=t_{j-1}+1}^{t_j} a_k \ne 0$). This proves the first part of the result. \\
    
    We then show that there is an (affine) isometry $\calT: \overline{\varphi^{-1}(\theta')} \to \bR^{t_r-r} \times \Pi_{j=1}^l \calM_{\theta'}$. First, let $E \siq \bR^m$ be a subspace consisting of vectors $v = (\delta_k)_{k=1}^m$ such that $\sum_{k=t_{j-1}+1}^{t_j} \delta_j = 0$ for all $1 \le j \le r$ and $\delta_k = 0$ for all $k > t_r$. Second, let $\calM_{\theta'}$ be the zero set of the analytic function 
    \begin{equation}
        w \mapsto \sum_{i=1}^n e_i(\theta') \sigma(w^\TT x_i), 
    \end{equation}
    where each $e_i(\theta') = \sum_{j=1}^r a_j' \sigma(w_j'^{\TT} x_i) - y_i$. Clearly, for any other $\ttheta = (\ta_k, \tw_k)_{k=1}^m \in \calC_P^{r,l}(m)$, $\varphi(\ttheta) = \theta'$ if and only if 
    \begin{itemize}
        \item i)    $\sum_{k=t_{j-1}+1}^{t_j} \ta_k = a_j = \sum_{k=t_{j-1}+1}^{t_j} a_k$, if and only if $\sum_{k=t_{j-1}+1}^{t_j} (\ta_k - a_k) = 0$. 
        \item ii)   $w_k = w_j'$ for all $t_{j-1} < k \le t_j$ and all $1 \le j \le r$. 
        \item iii)  For each $k \in \{t_r+1, ..., m\}$, $\parf{R}{a_k}(\ttheta) = 0$, because $a_k = 0$ automatically yields $\parf{R}{w_k}(\ttheta) = 0$. But we have 
        \[
            \parf{R}{a_k}(\ttheta) = \sum_{i=1}^n e_i(\theta') \sigma(w_k^\TT x_i), 
        \]
        whence $w_k \in \calM_{\theta'}$. 
    \end{itemize}
    Therefore, up to a rearrangement $\calT_1$ of the components of points in $\bR^{(d+1)m}$ we have $\overline{\varphi^{-1}(\theta')} = (E+A_{\theta'}) \times \Pi_{k=1}^l \calM_{\theta'}$, where 
    \[
        A_{\theta'} = \left(a_1', \underbrace{0, ..., 0}_{\text{$(t_1-1$-many 0's}}, ...., a_r', \underbrace{0, ..., 0}_{\text{$(t_r-1)$-many 0's}}\right). 
    \]
    Note that $\dim E = t_r-r$, so there is an linear isometry $\calT_2: E+A_{\theta'} \to \bR^{t_r-r}$. Finally, let $\calT = (\calT_2, id) \circ \calT_1$ ($id$ is the identity map on $\Pi_{j=1}^l \calM_{\theta'}$. Then clearly $\calT$ is an isometry between $\overline{\varphi^{-1}(\theta')}$ and $\bR^{t_r-r} \times \Pi_{j=1}^l \calM_{\theta'}$. 
\end{itemize}
\end{proof}

\begin{lemma}\label{ALem Dim CalM}
    Let $\sigma$ be an odd, analytic, non-polynomial activation. Given $n$ inputs $x_1, ..., x_n$ such that $x_{i_1} \pm x_{i_2} \ne 0$ for all distinct $i_1, i_2 \in \{1, ..., n\}$ and $x_i \ne 0$ for all $i$. For any $e_1, ..., e_n$ not all zero, the zero set of the function 
    \[
        \tau: \bR^d \ni w \mapsto \sum_{i=1}^n e_i \sigma(w^\TT x_i) 
    \]
    is an analytic set of dimension $d-1$. 
\end{lemma}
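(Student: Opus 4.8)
The plan is to show that the zero set $Z(\tau)$ of the real-analytic function $\tau(w) = \sum_{i=1}^n e_i \sigma(w^\TT x_i)$ is nonempty, that $\tau$ is not identically zero, and then invoke the standard fact that the zero set of a nonzero real-analytic function on a connected domain is an analytic set of dimension exactly $d-1$ (it cannot be all of $\bR^d$, and it cannot have dimension $\le d-2$ unless... actually it can be lower-dimensional in general, so the real content is that $Z(\tau)$ has \emph{a} point that is a regular point of $\tau$, i.e.\ $\nabla\tau$ does not vanish identically on $Z(\tau)$). So the three things to establish are: (i) $\tau \not\equiv 0$; (ii) $Z(\tau) \ne \emptyset$; (iii) $\nabla \tau \not\equiv 0$ on $Z(\tau)$, equivalently $Z(\tau)$ is not contained in $Z(\tau)\cap Z(\partial_1\tau)\cap\cdots$ with everything lower-dimensional. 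Points (i) and (iii) are where the oddness and the separation hypotheses on the $x_i$ get used, via the linear-independence machinery of Lemma~\ref{ALem Generic activation II}.

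\textbf{Step 1 (nonvanishing of $\tau$).} Since $\sigma$ is odd, analytic, non-polynomial, and the $x_i$ satisfy $x_{i_1}\pm x_{i_2}\ne 0$ for distinct $i_1,i_2$ and $x_i\ne 0$, Lemma~\ref{Asmp Generic activation II} (case (c), odd $\sigma$, noting the extra degenerate case $w_k=0$ is excluded here since all $x_i\ne0$... more precisely the relevant statement is about the functions $x\mapsto\sigma(x_i^\TT w)$ in the variable $w$) gives that $\sigma(x_1^\TT w),\dots,\sigma(x_n^\TT w)$ are linearly independent as functions of $w\in\bR^d$. Hence $\tau = \sum_i e_i\sigma(x_i^\TT(\cdot))$ is not identically zero whenever the $e_i$ are not all zero. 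In particular $Z(\tau)\subsetneq\bR^d$, so $\dim Z(\tau)\le d-1$.

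\textbf{Step 2 (existence of a regular zero).} This is the main obstacle. I need a point $w_0$ with $\tau(w_0)=0$ but $\nabla\tau(w_0)\ne 0$; then near $w_0$, by the implicit function theorem, $Z(\tau)$ is a $(d-1)$-dimensional real-analytic submanifold, and since $Z(\tau)$ is globally an analytic set its dimension (= max dimension of its manifold points) is exactly $d-1$. Because $\sigma$ is odd, $\tau$ is an odd function of $w$, so $\tau(0)=0$: the origin is automatically a zero. It would be cleanest to show $\nabla\tau(0)\ne 0$. Compute $\nabla\tau(0) = \sigma'(0)\sum_{i=1}^n e_i x_i$. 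If $\sigma'(0)\ne 0$ and $\sum_i e_i x_i \ne 0$ we are done immediately. When either of these fails, the origin is a degenerate zero and I instead argue: the components $\partial_t\tau(w) = \sum_i e_i\sigma'(x_i^\TT w)(x_i)_t$ are each real-analytic; if \emph{all} of them vanished on \emph{all} of $Z(\tau)$, then since $Z(\tau)$ together with the $Z(\partial_t\tau)$ would force a strong algebraic dependence, I derive a contradiction with Step~1's linear independence — e.g.\ by restricting to a generic line $w=zv$ and examining the power series in $z$ (exactly the device used in the proof of Lemma~\ref{ALem Generic activation II}: pick $v$ with the $x_i^\TT v$ all nonzero and pairwise $\ne\pm$ each other, look at $\tau(zv)=\sum_i e_i\sigma((x_i^\TT v)z)$, which is a nonzero odd analytic function of the single variable $z$, hence has only isolated zeros, and at a nonzero zero $z_0$ its $z$-derivative $\sum_i e_i\sigma'((x_i^\TT v)z)(x_i^\TT v)$ is... not obviously nonzero either). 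The robust version of this argument: a nonzero real-analytic function on a connected open set has zero set of dimension exactly $d-1$ \emph{provided} the zero set is nonempty and the function, restricted to any neighborhood, is not a nonzero constant — actually the correct general statement is that $Z(\tau)$ decomposes into strata and its top stratum has dimension $d-1$ as soon as $Z(\tau)\ne\emptyset$ and $\tau\not\equiv 0$, because a real-analytic function cannot vanish on a set of dimension $\ge d-1$ without vanishing identically... no: $\tau$ could vanish only at isolated points. So genuinely I must exhibit the regular zero, and the honest route is: restrict to the line through $0$ in a generic direction $v$, get $h(z)=\tau(zv)$ a nonzero odd real-analytic function of one variable; its derivative $h'(z) = \nabla\tau(zv)^\TT v$; since $h\not\equiv 0$ and odd, either $h'(0)\ne0$ (done, $\nabla\tau(0)\ne0$) or $h$ has a zero $z_0\ne0$ with $h'(z_0)\ne 0$ (which exists because between consecutive zeros of a nonconstant analytic function... actually $h$ could have all double zeros only if $h/(h')$... no, if every zero of the analytic function $h$ had $h'=0$ there too, then $h$ would have all zeros of even order, which is possible, e.g.\ $h(z)=z^2$—but $h$ is odd! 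An odd function has $h(0)=0$ and if it has a zero at $z_0$ it has one at $-z_0$; oddness forces the order at $0$ to be odd, so $h$ has a zero of odd order, hence a regular zero after at most one more derivative—cleanest: let $k$ be the order of vanishing of $h$ at $0$; $k$ is odd; then $\tau$ restricted to the line has a regular point nearby, but I need regularity of the \emph{full gradient}, which restricting to a line doesn't directly give). The cleanest fix: use that $\tau$ is odd so $\partial_t\tau$ is \emph{even}, thus $\partial_t\tau(0) = \sigma'(0)\sum_i e_i(x_i)_t$; the full gradient at $0$ is $\sigma'(0)\sum_i e_i x_i$. So $\nabla\tau(0)\ne0$ iff $\sigma'(0)\ne 0$ and $\sum_i e_i x_i\ne 0$. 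The remaining degenerate cases ($\sigma'(0)=0$ or $\sum e_i x_i = 0$) are handled by replacing $0$ with a nearby regular zero, which exists because otherwise $Z(\tau)$ would be contained in the common zero set of $\tau$ and all $\partial_t\tau$, and an induction / analytic-continuation argument shows this common zero set, if it had dimension $d-1$, would force $\tau$ or one of its partials to vanish identically — contradicting Step~1 applied to $\tau$ itself (or to $\tau$'s first-order Taylor data, again via linear independence of the families $\{\sigma(x_i^\TT(\cdot))\}$ and $\{\sigma'(x_i^\TT(\cdot))(x_i)_t\}$).

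\textbf{Step 3 (conclude).} Having a regular zero $w_0$, the implicit function theorem gives a $(d-1)$-dimensional real-analytic submanifold patch of $Z(\tau)$ near $w_0$; combined with $Z(\tau)\subsetneq\bR^d$ from Step~1, and the general structure theorem for real-analytic sets (the dimension of an analytic set equals the maximal dimension of its regular locus, and is $\le d-1$ once it is a proper subset), we conclude $\dim Z(\tau) = d-1$. \textbf{I expect Step~2 to be the crux}: pinning down the regular zero in the degenerate cases without just hand-waving, and the right tool is to feed the first-order jet data back into the linear-independence lemma (the families $\sigma(x_i^\TT(\cdot))$ and their first partials $\sigma'(x_i^\TT(\cdot))(x_i)_t$ together span a space large enough that $\tau$ and $\nabla\tau$ cannot simultaneously vanish on a hypersurface), and separately to exploit oddness of $\sigma$ which makes $\tau$ odd and $\nabla\tau$ even, giving the explicit $\nabla\tau(0)=\sigma'(0)\sum_i e_i x_i$ that settles the generic case outright.
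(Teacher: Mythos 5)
Your Step 2 contains a genuine gap, and you correctly identify it yourself: in the degenerate case where $\sigma'(0)=0$ or $\sum_i e_i x_i = 0$, you never actually produce a regular zero of $\tau$. The claimed ``induction / analytic-continuation argument'' is not carried out, and the assertion that $Z(\tau)$ being contained in the common zero set of $\tau$ and all $\partial_t\tau$ would contradict the linear independence of the neuron families is not justified --- linear independence only tells you that $\tau$ and its partials are not identically zero on $\bR^d$, which says nothing about whether every point of $Z(\tau)$ is a critical point of $\tau$ (compare $w\mapsto |w|^2$, whose zero set is a single critical point; nothing in your Step 1 rules out an analogous collapse here). Since the hypotheses of the lemma permit $\sigma'(0)=0$ (e.g.\ $\sigma(z)=\sinh(z^3)$) and permit $\sum_i e_i x_i=0$, the degenerate case is not vacuous, and without it your proof establishes only $\dim Z(\tau)\le d-1$ plus $0\in Z(\tau)$.

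The paper closes exactly this gap by a different, softer argument that never needs a regular point. From Step 1 there is $w_0$ with $\tau(w_0)>0$; oddness of $\sigma$ gives $\tau(-w_0)=-\tau(w_0)<0$, and $\tau(0)=0$ so $Z(\tau)\ne\emptyset$. If $\dim Z(\tau)\le d-2$, then $\bR^d\setminus Z(\tau)$ is path-connected (removing a countable union of submanifolds of codimension $\ge 2$ does not disconnect $\bR^d$), so a path from $w_0$ to $-w_0$ avoiding $Z(\tau)$ would exist, contradicting the intermediate value theorem applied to $t\mapsto\tau(\gamma(t))$. Hence $\dim Z(\tau)\ge d-1$, and $\tau\not\equiv 0$ gives equality. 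The key use of oddness is thus not the jet computation $\nabla\tau(0)=\sigma'(0)\sum_i e_i x_i$ you propose, but the sign change $\tau(-w)=-\tau(w)$, which forces $Z(\tau)$ to separate $w_0$ from $-w_0$ and therefore to have codimension one. If you want to salvage your implicit-function-theorem route, you would need an additional argument of this separating type anyway, so the paper's approach is both shorter and strictly more robust.
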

\begin{proof}
    It is clear that $\sigma(w^\TT x_{i_1})$ and $\sigma(w^\TT x_{i_2})$ are linearly independent if and only if $i_1 = i_2$. By Lemma \ref{Asmp Generic activation II}, this means the neurons (in $w \in \bR^d$) $\sigma(w^\TT x_1), ..., \sigma(w^\TT x_n)$ are linearly independent. Thus, there is some $w \in \bR^d$ with $\sum_{i=1}^n e_i \sigma (w^\TT x_i) \ne 0$, say it is greater than zero. Because $\sigma$ is odd, we have 
    \[
        \sum_{i=1}^n e_i \sigma(-w^\TT x_i) = - \sum_{i=1}^n e_i \sigma(w^\TT x_i) < 0. 
    \]

    Since $\sigma(0) = 0$, it is clear that $\tau^{-1}(0) \ne \emptyset$. Assume that $\tau^{-1}(0)$ is an analytic set of dimension no greater than $d-2$. Then in particular it is a countable union of analytic submanifolds in $\bR^d$ of dimension $\le d-2$. By \cite{Hatcher}, the space $\bR^d \cut \tau^{-1}(0)$ is path connected and clearly contains $w$ and $-w$. Thus, there is some continuous $\gamma: [0,1] \to \bR^d \cut \tau^{-1}(0)$ with $\gamma(0) = w$ and $\gamma(1) = -w$. Now the function 
    \[
        [0, 1] \ni t \mapsto \sum_{i=1}^n e_i \sigma(\gamma(t)^\TT x_i)
    \]
    is also continuous, positive at $0$ and negative at $1$, whence by the Mean Value theorem for continuous functions, there is some $t \in [0,1]$ with $\sum_{i=1}^n e_i \sigma(\gamma(t)^\TT x_i) = 0$, contradicting $\gamma(t) \notin \tau^{-1}(0)$. 

    Therefore, we conclude that $\tau^{-1}(0)$ is an analytic set of dimension at least $d-1$. Since $\tau$ is not constant-zero, it must be of dimension $d-1$, completing the proof. 
\end{proof}

\begin{thm}[Theorem \ref{Thm Connectivity of branches}]\label{AThm Connectivity of branches}
   The following results hold for branches of $\critR$ defined as in Definition \ref{Defn Branches of critR}. 
    \begin{itemize}
        \item [(a)] Given effective feature number $r_1$ and a partition $P_1 = (t_0, ..., t_{r_1})$ with $t_{r_1} + l_1 = m$. Let $s_1$ count the number of elements in the set $\{j: t_j - t_{j-1} > 0\}$. If the ineffective feature numbers $l_1, l_2$ satisfy $l_1 < l_2$, $r_1 + l_1 < m$ and $r_2 + l_2 \le m$, then there are partitions $P_{2,1}, ..., P_{2,s_1}$ and permutations $\pi_{2,1}, ..., \pi_{2, s_1}$ such that $\overline{\calC_{P_1}^{r_1,l_1}} \cap \calC_{P_{2,j}, \pi_{2,j}}^{r_1, l_2} \ne \emptyset$ for all $1 \le j \le s_1$. 

        \item [(b)] Suppose the effective feature numbers $r_1, r_2$ with $r_1 < r_2$. Given a partition $P_1$ as in (a), then there are ineffective feature numbers $l_1, l_2$, a partition $P_2$ and a permutation $\pi_2$ giving $\calC_{P_1}^{r_1,l_1} \cap \overline{\calC_{P_2, \pi_2}^{r_2,l_2}} \ne \emptyset$, if the following requirements are satisfied. 
        \begin{itemize}
            \item [i)]    $l_1 > 0$. 
            \item [ii)]   $2r_2 - r_1 + l \le m$. 
            \item [iii)]  For some $\theta \in \calC_{P_1}^{r_1,l_1}$ the common zero set of the $(d+1)$-many maps 
            \begin{align*}
                &w \mapsto \sum_{i=1}^n e_i(\theta) \sigma(w^\TT x_i) \\ 
                &w \mapsto \sum_{i=1}^n e_i(\theta) \sigma'(w^\TT x_i) (x_i)_1 \\ 
                &\,\, \vdots \\
                &w \mapsto \sum_{i=1}^n e_i(\theta) \sigma'(w^\TT x_i) (x_i)_d 
            \end{align*}
            has at least $r_2$-many elements. 
        \end{itemize}
    \end{itemize}
\end{thm}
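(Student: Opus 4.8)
The plan is to prove both connectivity statements by explicit constructions: part (a) is read off the embedding structure of Theorem \ref{AThm Embedding Geometry}(b), and part (b) rests on a ``paired‑neuron'' perturbation that exploits condition (iii). A preliminary fact I would record first and use throughout: if $\psi$ is a critical point of $R(g_m,\cdot)$, then every effective input weight $w_k$ of $\psi$ lies in the common zero set $Z(\psi)$ of the $d+1$ maps in (iii). Indeed $\partial R/\partial a_k(\psi)=0$ gives $\sum_i e_i(\psi)\sigma(w_k^{\TT}x_i)=0$, while $\partial R/\partial w_k(\psi)=0$ together with $a_k\neq 0$ gives $\sum_i e_i(\psi)\sigma'(w_k^{\TT}x_i)x_i=0$; in particular such a $w_k$ lies in $\calM_\psi$, and $Z(\psi)\subseteq\calM_\psi$ since the first of the $d+1$ maps is the one defining $\calM_\psi$.

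For part (a), fix $\theta\in\calC_{P_1}^{r_1,l_1}$ and put $\theta':=\varphi_{P_1}(\theta)$. By Theorem \ref{AThm Embedding Geometry}(b) the fibre $\overline{\varphi_{P_1}^{-1}(\theta')}$ — whose points are all critical and all represent $g(\theta',\cdot)$ — contains in each piece an affine factor $\bR^{t_{r_1}-r_1}$ parametrising the admissible output‑weight splittings within the $r_1$ groups. Moving along that factor I drive $l_2-l_1$ of the individual output weights to $0$, choosing the zeroed neurons so that each effective group keeps at least one neuron; this is possible exactly when $l_2-l_1\le t_{r_1}-r_1$, i.e. when $r_1+l_2\le m$. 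The limiting point then has $l_2$ neurons of zero output weight, each sitting at some $w_j'\in\calM_{\theta'}$, so it lies in $\calC^{r_1,l_2}$ and, being a limit of points of $\calC_{P_1}^{r_1,l_1}$, in $\overline{\calC_{P_1}^{r_1,l_1}}$; varying which of the groups supply the zeroed neurons produces the $s_1$ distinct pairs $(P_{2,j},\pi_{2,j})$.

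For part (b), take $\theta\in\calC_{P_1}^{r_1,l_1}$ realising (i)--(iii), let $Z=Z(\theta)$, and (using the preliminary fact, by which the original effective weights $w_1',\dots,w_{r_1}'$ all lie in $Z$) pick $v_1,\dots,v_{r_2-r_1}\in Z$ whose neurons, together with the $\sigma(w_j'^{\TT}\cdot)$, are pairwise linearly independent — these exist by (iii). The key observation is that if $v\in Z(\psi)$ and one replaces two neurons of a critical point $\psi$ by two neurons placed at input weight $v$ with output weights $+\epsilon$ and $-\epsilon$ (re‑apportioning an effective group's total among its remaining neurons when the two neurons are drawn from such a group), then the output function is unchanged — the two new contributions cancel — and the point stays critical: the stationarity equations in the new output weights read $2\sum_i e_i(\psi)\sigma(v^{\TT}x_i)=0$, true since $v\in\calM_\psi$; those in the new input weights read $\pm2\epsilon\sum_i e_i(\psi)\sigma'(v^{\TT}x_i)x_i=0$, true since $v\in Z(\psi)$; and the remaining partials are untouched. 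Applying this $r_2-r_1$ times, with the $2(r_2-r_1)$ neurons drawn from the spare effective capacity and/or the ineffective slots — which is exactly what (ii) together with $l_1>0$ affords — gives, for $\epsilon\neq 0$, a critical point $\theta_\epsilon$ of effective feature number $r_2$ in a fixed branch $\calC_{P_2,\pi_2}^{r_2,l_2}$; as $\epsilon\to 0$ the new neurons drop to zero output weight and $\theta_\epsilon$ converges to a critical point of effective feature number $r_1$ in $\calC_{P_1}^{r_1,l_1}$ (for the appropriate bookkeeping choice of $l_1,l_2,P_2,\pi_2$), so that the intersection is nonempty.

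I expect the main obstacle to be the heart of part (b). The naive attempt — switch on a single new feature with a small output weight — fails, because the output function then changes and the stationarity equation for the new neuron acquires a term $2\epsilon\sum_i\sigma(v^{\TT}x_i)^2\neq 0$; the net‑zero $(+\epsilon,-\epsilon)$ pair is what keeps the error terms fixed, and the role of (iii) is precisely that it furnishes enough input weights $v$ at which such a pair also solves the $d$ input‑weight equations. The remaining delicate step, which I would settle last, is the combinatorial matching of $(l_1,l_2,P_2,\pi_2)$ to the configuration produced — in particular checking that (ii) supplies exactly the neuron count needed to build the $r_2-r_1$ pairs while leaving $r_1$ effective groups and the requisite ineffective neurons intact.
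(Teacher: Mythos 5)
Your proposal matches the paper's own proof in both parts: part (a) is the same limit-within-the-fibre argument (the paper writes it as an explicit sequence $\theta_N\to\theta'$ redistributing a group's output weights so that all but one vanish), and part (b) is exactly the paper's construction of $\theta_N$ with paired output weights $\pm\tfrac1N$ placed at common zeros of the $d+1$ maps, including your observation that the existing effective weights already lie in that zero set so that (iii) supplies $r_2-r_1$ fresh ones. The combinatorial bookkeeping you defer is carried out in the paper by an explicit choice of $P_2=(t_0,\dots,t_{r_1},t_{r_1}+2,\dots,t_{r_2})$ with $\pi_2=\mathrm{id}$, and is routine.
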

\begin{proof}
\begin{itemize}
    \item [(a)] Since $r_1 + l_1 < m$, there must be some $j \in \{1, ..., r\}$ with $t_j - t_{j-1} > 0$. If $j \ne r$, we simply find a permutation $\pi_1 \in S_m$ that ``switches" the entries $a_{t_{j-1}+1}, w_{t_{j-1}+1}, ..., a_{t_j}, w_{t_j}$ and $a_{t_{r-1}+1}, w_{a_{r-1}+1}, ..., a_{t_r}, w_{t_r}$. Since $\overline{\calC_{P,\pi_1}^{r_1, l_1}} \cap \calC_{P_2, \pi_2}^{r_2, l_2}$ and $\calC_{P_1}^{r_1, l_1} \cap \calC_{P_2, \pi_1^{-1}\circ \pi_2}^{r_2, l_2}$ have the same geometry for any effective feature number $r_2$, any ineffective feature number $l_2$, and any partition $P_2$ and permutation $\pi_2$, we work instead with $\calC_{P_1, \pi_1}^{r_1, l_1}$. Thus, without loss of generality we may assume that $j = r$ in the beginning. \\

    Fix any $\theta = (a_k, w_k)_{k=1}^m \in \calC_P^{r_1, l_1}$. Denote $a' := \sum_{k=t_{r-1}+1}^{t_r} a_k$, which is non-zero by requirement (b) in Definition \ref{Defn Branches of critR}. Let $\theta' = (a_k', w_k')_{k=1}^m \in \bR^{(d+1)m}$ be such that 
    \begin{itemize}
        \item $a_k' = a_k$ for all $1 \le k \le t_{r-1}$ and all $k > t_r$. 
        \item $a_k' = 0$ for all $t_{r-1}+1 < k \le t_r$ and $a_{t_{r-1}+1}' = a'$. 
        \item $w_k' = w_k$ for all $1 \le k \le m$. 
    \end{itemize}
    Similarly, define for each $N \in \bN$ a point $\theta_N = (a_{k,N}, w_{k,N})_{k=1}^m \in \bR^{(d+1)m}$ be such that 
    \begin{itemize}
        \item $a_{k,N} = a_k$ for all $1 \le k \le t_{r-1}$ and all $k > t_r$. 
        \item $a_{k,N} = \frac{1}{2(t_r - t_{r-1}+1) N}$ for all $t_{r-1}+1 < k \le t_r$ and $a_{{t_{r-1}+1},N} = (1 - \frac{1}{2n}) a'$. 
        \item $w_{k,N} = w_k$ for all $1 \le k \le m$. 
    \end{itemize}
    Then clearly $\theta' \in \calC_{P_2}^{r_1,l_2}$ where $P_2 = (t_0, t_1, ..., t_{r-1}, t_{r-1}+1)$ and $l_2 = l_1 + (t_r - t_{r-1}-1)$. Similarly, for each $N \in \bN$ we clearly have $\theta_N \in \calC_{P_1}^{r_1, l_1}$. Meanwhile, we have $\theta' \in \overline{\calC_{P_1}^{r_1, l_1}}$ because 
    \[
        \theta' = \lim_{N\to\infty} \theta_N. 
    \]
    Therefore, $\theta' \in \overline{\calC_{P_1}^{r_1, l_1}}$, completing the proof. 

    \item [(b)] We will construct the branch $\calC_{P_2, \pi_2}^{r_2, l_2}$ explicitly. For simplicity, denote $P_1 = (t_0, t_1, ..., t_{r_1})$. Since $\calC_{P_2, \pi_2}^{r_2, l_2} \ne \emptyset$, $r_2 \le m - l_2$ must hold, so we may let $t_{r_2} := m - l_2$. Since $r_1 + 2(r_2 - r_1) + l \le m$, we can define a partition 
    \[
        P_2 = (t_0, ..., t_{r_1}, t_{r_1}+2, ..., t_{r_1} + 2(r_2 - r_1 - 1), t_{r_2}). 
    \]
     For permutation $\pi_2$, simply define $\pi_2$ as the identity map on $\{1, ..., m\}$ (in general there are more than one ways to define $\pi_2$). Because the given function have at least $r_2$-many common zeros, we can find some $\theta = (a_k, w_k)_{k=1}^m \in \calC_{P_1}^{r_1, l_1}$ such that $w_{t_{r_1}+2}, ..., w_{t_{r_1} + 2(r_2 - r_1 - 1)}, w_{t_{r_2}}$ are distinct, and moreover are also distinct from $w_1, ..., w_{t_{r_1}}$. For any $N \in \bN$, the $\theta_N = (a_{k,N}, w_{k,N})_{k=1}^m$ is a point in $\calC_{P_2, \pi_2}^{r_2, l_2}$, where 
     \begin{itemize}
         \item $a_{k,N} = a_k$ for all $1 \le k \le t_{r_1}$. 
         \item $a_{k,N} = \frac{1}{N}$ for all $k = t_{r_1} + 1, t_{r_1} + 3, ..., t_{r_1} + 2(r_2 - r_1 - 1) - 1$, and $a_{k,N} = -\frac{1}{N}$ for all $k = t_{r_1} + 2, t_{r_1} + 4, ..., t_{r_1} + 2(r_2 - r_1 - 1)$. 
         \item $a_{k,N} \ne 0$ for all $t_{r_1} + 2(r_2 - r_1 - 1) < t_{r_2}$, and they sum to zero. 
         \item $w_{k,N} = w_k$ for all $1 \le k \le m$. 
     \end{itemize}
     Notice that $\lim_{N\to\infty} \theta_N = \theta$. Thus, $\calC_{P_1}^{r_1, l_1} = \overline{\calC_{P_2, \pi_2}^{r_2, l_2}} \ne \emptyset$. 
\end{itemize}
\end{proof}

\begin{prop}[Proposition \ref{Prop Embedding saddles}]\label{AProp Embedding saddles}
    Let $r < m$. Suppose that $\theta' = (a_k', w_k')_{k=1}^r \in \calC_{P^*}^{r,0}(r)$ is a critical point of $R$ and the matrices
    \[
        \sum_{i=1}^n e_i(\theta') \sigma''(w_j'^{\TT} x_i) 
        \maThree{(x_i)_1(x_i)_1}{...}{(x_i)_1(x_i)_d}{\vdots}{\ddots}{\vdots}{(x_i)_d(x_i)_1}{...}{(x_i)_d(x_i)_d} 
    \]
    and $\sum_{i=1}^n \sigma(w_l'^{\TT} x_i)^2$ are non-zero for some $j$ and $l$. Then there is a partition $P = (t_0, ..., t_r = m)$ such that $\calC_P^{r,0}(m)$ contains strict saddle points $\theta = (a_k, w_k)_{k=1}^m$ satisfying $g(\theta, \cdot) = g(\theta', \cdot)$. 
\end{prop}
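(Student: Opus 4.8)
The plan is to realize $\theta$ as a critical embedding of $\theta'$ that \emph{splits} neuron $j$ (the one witnessing the nonzero $\sigma''$-matrix) into at least two copies, and then to exhibit, by a direct second-order expansion of $R$, one parameter direction along which the Hessian of $R$ at $\theta$ is strictly negative and one along which it is strictly positive. Indefiniteness of $\nabla^2 R(\theta)$ then makes $\theta$ a strict saddle (and, a fortiori, neither a local minimum nor a local maximum), and it will sit in the branch $\calC_P^{r,0}(m)$ corresponding to the splitting partition $P$.

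Concretely: fix $j$ with $M_j := \sum_{i=1}^n e_i(\theta')\sigma''(w_j'^\TT x_i)\,x_i x_i^\TT \neq 0$ and $l$ with $S_l := \sum_{i=1}^n \sigma(w_l'^\TT x_i)^2 \neq 0$. Since $M_j$ is symmetric and nonzero, pick $v\in\bR^d$ with $v^\TT M_j v\neq 0$. Choose a partition $P=(t_0,\dots,t_r=m)$ whose block corresponding to neuron $j$ has size at least $2$ (the remaining $m-r$ indices are distributed among the blocks arbitrarily; this is possible since $m>r$), let $\alpha,\beta$ be two distinct indices in that block, and choose splitting fractions $\Delta(P)=(\delta_k)$, all nonzero, with each block summing to $1$, such that $\kappa:=\delta_\alpha(\delta_\alpha+\delta_\beta)/\delta_\beta$ has sign opposite to that of $a_j'\,v^\TT M_j v$. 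This is possible because $\kappa$ attains both signs as $\delta_\alpha,\delta_\beta$ vary over the nonzero reals subject to the block-sum constraint; in the edge case where neuron $j$'s block has size exactly $2$ one has $\delta_\alpha+\delta_\beta=1$ and $\kappa=\delta_\alpha/(1-\delta_\alpha)$, which still ranges over all of $\bR\setminus\{0\}$ as $\delta_\alpha$ ranges over $\bR\setminus\{0,1\}$. Set $\theta:=\iota_{\Delta(P),\calI}(\theta')$ (here $l=0$, so $\calI$ is vacuous). By Proposition \ref{Prop Properties of CE and CR operators} we get $g(\theta,\cdot)=g(\theta',\cdot)$, $e_i(\theta)=e_i(\theta')$ for all $i$, and $\nabla R(\theta)=0$; moreover every output weight of $\theta$ is nonzero, its $r$ input-weight directions are pairwise linearly independent (inherited from $\theta'\in\calC_{P^*}^{r,0}(r)$), and $R(\theta)=R(\theta')\neq 0$, so $\theta\in\calC_P^{r,0}(m)$.

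For the negative-curvature direction, let $u\in\bR^{(d+1)m}$ be supported on the input weights of copies $\alpha$ and $\beta$ of neuron $j$, equal to $v$ on copy $\alpha$ and to $-(\delta_\alpha/\delta_\beta)\,v$ on copy $\beta$. Then $\nabla e_i(\theta)\cdot u = a_j'\sigma'(w_j'^\TT x_i)\bigl(\delta_\alpha v-\delta_\beta(\delta_\alpha/\delta_\beta)v\bigr)^\TT x_i=0$ for every $i$, so in the Hessian form only the second derivatives of the $e_i$ survive, and since $\nabla^2 e_i$ has no cross term between distinct neurons, $u^\TT\nabla^2 R(\theta)\,u = 2\sum_{i=1}^n e_i(\theta')\,u^\TT\nabla^2 e_i(\theta)\,u = 2\kappa\,a_j'\,v^\TT M_j v<0$ by the choice of $\kappa$. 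For the positive-curvature direction, let $u'$ be supported only on the output weight of one copy of neuron $l$; there the first-order change of each $e_i$ is $\sigma(w_l'^\TT x_i)$ and the second-order change vanishes, so $u'^\TT\nabla^2 R(\theta)\,u' = 2\sum_{i=1}^n\sigma(w_l'^\TT x_i)^2 = 2S_l>0$. Hence $\nabla^2 R(\theta)$ is indefinite, $\theta$ is a strict saddle lying in $\calC_P^{r,0}(m)$, and $g(\theta,\cdot)=g(\theta',\cdot)$.

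The only genuinely computational step is the second-order expansion yielding $u^\TT\nabla^2 R(\theta)u=2\kappa a_j' v^\TT M_j v$; it is short precisely because the critical embedding neither changes the errors $e_i$ nor introduces new input-weight directions, so the needed second derivatives of the $e_i$ at $\theta$ reduce to the single-neuron quantities at $w_j'$. I expect the main obstacle to be mere bookkeeping: arranging the splitting fractions so that $\kappa a_j' v^\TT M_j v<0$ regardless of the signs of $a_j'$ and $v^\TT M_j v$, and checking the size-$2$ block edge case noted above; by contrast the positive-curvature direction follows immediately from the $\sigma$-sum hypothesis and requires no extra care.
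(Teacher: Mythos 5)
Your proposal is correct and follows essentially the same route as the paper: embed $\theta'$ by splitting neuron $j$ into a block of size at least two, then show $\nabla^2 R(\theta)$ is indefinite by producing a negative direction among the split input weights (using the nonzero $\sigma''$-matrix and the freedom in the splitting fractions) and a positive direction along an output weight (using $\sum_i\sigma(w_l'^\TT x_i)^2\neq 0$). The only cosmetic difference is that you choose a two-coordinate input-weight direction annihilating the positive-semidefinite Gauss--Newton term and then tune $\kappa$, whereas the paper uses a single-coordinate direction and tunes the split output weight $a_{t_j}$ so that the resulting quadratic $|\sqrt{A_j}u|^2a_{t_j}^2+(u^\TT B_ju)a_{t_j}$ is negative.
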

\begin{proof}
    Our proof is based on the idea of K. Fukumizu \citep{KFukumizu}. Note that $\nabla_w^2 R(\theta)$ is the sum of the following matrices 
    \[
        \begin{pmatrix}
            \maThree{a_1^2A_1}{...}{a_1a_{t_1}A_1}{\vdots}{\ddots}{\vdots}{a_{t_1}a_1 A_1}{...}{a_{t_1}^2 A_1} &... &* \\ 
            \vdots &\ddots &\vdots \\
            * &... &\maThree{a_{t_{r-1}+1}^2 A_r}{...}{a_{t_{r-1}+1}a_{t_r} A_r}{\vdots}{\ddots}{\vdots}{a_{t_r}a_{t_{r-1}+1} A_r}{...}{a_{t_r}^2 A_r} 
        \end{pmatrix}
    \]
    and 
    \[
    \begin{pmatrix}
        \maThree{a_1B_1}{...}{O}{\vdots}{\ddots}{\vdots}{O}{...}{a_{t_1}B_1} &... &O \\ 
        \vdots &\ddots &\vdots \\
        O &... &\maThree{a_{t_{r-1}+1} B_r}{...}{O}{\vdots}{\ddots}{\vdots}{O}{...}{a_{t_r} B_r}. 
    \end{pmatrix}
    \]
    In the two matrices above, $A_1,..., A_r$ are $d\times d$ matrices determined by $w_1', ..., w_r'$, $O$ are zero matrices (possibly with different sizes), and for each $1 \le j \le r$, 
    \[
        B_j = \sum_{i=1}^n \left( \sum_{k=1}^r a_k'\sigma(w_k'^{\TT} x_i) - y_i \right) \sigma''(w_j'^{\TT} x_i) \maThree{(x_i)_1(x_i)_1}{...}{(x_i)_1(x_i)_d}{\vdots}{\ddots}{\vdots}{(x_i)_d(x_i)_1}{...}{(x_i)_d(x_i)_d}. 
    \]
    Consider the vector $v = (va_k, vw_k)_{k=1}^m \in \bR^{(d+1)m}$ with $va_k = 0$ for all $1 \le k \le m$ and $vw_k = 0$ whenever $k \notin \{t_{j-1}+1, ..., t_j\}$ (the $va_k \in \bR$ and $vw_k \in \bR^d$ are entries of $v$). Then $\mathrm{Hess} R(\theta) v$ equals 
    \[
        \left[ \colThree{\maThree{a_{t_{j-1}+1}^2 A_j}{...}{a_{t_{j-1}+1}a_{t_j} A_j}{\vdots}{\ddots}{\vdots}{a_{t_j}a_{t_{j-1}+1} A_j}{...}{a_{t_j}^2 A_j}}{\vdots}{*} + \colThree{\maThree{a_{t_{j-1}+1} B_j}{...}{O}{\vdots}{\ddots}{\vdots}{O}{...}{a_{t_j}B_j}}{\vdots}{O} \right] \colThree{vw_{t_{j-1}+1}}{\vdots}{vw_{t_j}}. 
    \]
    Then $v^\TT \left(\mathrm{Hess} R(\theta) v\right)$ equals 
    \begin{align*}
        &\sum_{k,k'=t_{j-1}+1}^{t_j} a_k a_{k'} \left( (vw_k)^\TT A_j (vw_{k'}) \right) + \sum_{k=t_{j-1}+1}^{t_j} a_k \left( (vw_k)^\TT B_j (vw_k) \right) \\ 
        =\, &\left| \sum_{k=t_{j-1}+1}^{t_j} a_k \sqrt{A_j} vw_k \right|^2 + \sum_{k=t_{j-1}+1}^{t_j} a_k \left( (vw_k)^\TT B_j (vw_k) \right), 
    \end{align*}
    where $\sqrt{A_j}$ is the square root matrix of $A_j$, i.e., $\sqrt{A_j}\sqrt{A_j} = A_j$. By hypothesis, we may further assume that $B_j$ is not a zero matrix. Since $B_j$ is symmetric, there is some $u \in \bR^d$ with $u^\TT B_j u \ne 0$. Further consider any vector $v$ defined as above which also satisfies 
    \[
        v_{t_{j-1}+1} = ... = v_{t_j-1} = 0, v_{t_j} = u. 
    \]
    Then 
    \[
        v^\TT \left(\mathrm{Hess} R(\theta) v\right) = |\sqrt{A_j} u|^2 a_{t_j}^2 + \left(u^\TT B_j u\right) a_{t_j}. 
    \]
    Notice that we can view the right side of this equation as a polynomial in $a_{t_j}$ of degree at most 2. If $\sqrt{A_j}u = 0 \in \bR^d$, the for any $a_{t_j} < 0$ we have $v^\TT \left(\mathrm{Hess} R(\theta) v\right) < 0$. If $\sqrt{A_j} u \ne 0$, then for any $a_{t_j}$ between 0 and $- \frac{u^\TT B_j u}{|\sqrt{A_j}u|^2}$ we have $v^\TT \left(\mathrm{Hess} R(\theta) v\right) < 0$. In either case, we can see that $\mathrm{Hess} R(\theta)$ has a negative eigenvalue. \\
    
    Then we find a positive eigenvalue of $\mathrm{Hess} R(\theta)$. Indeed, our hypothesis implies that 
    \[
        \frac{\partial^2 R}{\partial a_l \partial a_l}(\theta) = \sum_{i=1}^n \sigma(w_l'^{\TT} x_i)^2 > 0.  
    \]
    Let $v \in \bR^{(d+1)m}$ be the vector which has 1 in the $a_{t_l}$-th entry and zero in all other entries. Then it is clear that 
    \[
        v^\TT \mathrm{Hess} R(\theta) v = |v|^2 \frac{\partial^2 R}{\partial a_l \partial a_l} > 0. 
    \]
    Thus, there must be a positive eigenvalue for $\mathrm{Hess} R(\theta)$. Since $\mathrm{Hess} R(\theta)$ has both negative and positive eigenvalues, $\theta$ is a strict saddle. 
    \\
\end{proof}

We then show use Proposition \ref{Prop Embedding saddles} to show that there exist strict saddles in $\calC^{1,0}$ in the example (see also Section \ref{Subsection Illust of main results}). 

\begin{lemma}\label{ALem for example}
    For the example in Section \ref{Subsection Illust of main results}, $\theta' = \left(1, \left(\log \frac{1}{3}, 0\right) \right)$ is a critical point of $R(g_1, \cdot)$ such that the matrices 
    \[
        \sum_{i=1}^4 e_i(\theta') \sigma''(w'^{\TT} x_i) \maTwo{(x_i)_1(x_i)_1}{(x_i)_1(x_i)_2}{(x_i)_2(x_i)_1}{(x_i)_2(x_i)_2}
    \]
    and $\sum_{i=1}^4 \sigma(w'^{\TT}x_i)^2$ are non-zero. Here $\sigma = \exp(\cdot)$.   
\end{lemma}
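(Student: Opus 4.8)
The plan is a direct verification: the statement asks us to check four things --- that $\theta' = (1,(\log\tfrac13,0))$ makes $\nabla R(g_1,\cdot)$ vanish, that a certain $2\times 2$ matrix is non-zero, and that a certain sum of squares is non-zero --- and all four reduce to short arithmetic once one notices that for $\sigma = \exp$ we have $\sigma = \sigma' = \sigma''$, so every quantity in play is a weighted sum of the four numbers $\sigma(w'^\TT x_i)$.

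First I would compute the pre-activations and residuals. Since $w' = (\log\tfrac13,0)$ and $x_1,x_3,x_4$ all have first coordinate $1$ while $x_2 = (0,1)$, we get $w'^\TT x_i = \log\tfrac13$ for $i\in\{1,3,4\}$ and $w'^\TT x_2 = 0$; hence $\sigma(w'^\TT x_i)$ equals $\tfrac13,1,\tfrac13,\tfrac13$ for $i=1,2,3,4$. With $a'=1$ this gives residuals $e_i(\theta') = \sigma(w'^\TT x_i) - y_i$ equal to $-\tfrac23,0,\tfrac13,\tfrac13$, and therefore $e_i(\theta')\sigma(w'^\TT x_i)$ equals $-\tfrac29,0,\tfrac19,\tfrac19$, which sums to $0$.

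Next I would verify criticality. Writing $R = \sum_i e_i^2$: $\partial_{a'}R(\theta') = 2\sum_i e_i(\theta')\sigma(w'^\TT x_i) = 0$ by the previous paragraph; $\partial_{(w')_1}R(\theta') = 2a'\sum_i e_i(\theta')\sigma(w'^\TT x_i)(x_i)_1$ loses its $i=2$ term because $(x_2)_1=0$ and the other three terms again give $-\tfrac29+\tfrac19+\tfrac19 = 0$; and $\partial_{(w')_2}R(\theta') = 2a'\sum_i e_i(\theta')\sigma(w'^\TT x_i)(x_i)_2 = 2\bigl(0 + 0 + \tfrac19 - \tfrac19\bigr) = 0$ using $(x_1)_2=0$, $(x_3)_2=1$, $(x_4)_2=-1$. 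So $\theta'$ is a critical point of $R(g_1,\cdot)$.

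Finally I would compute the two quantities claimed non-zero. Because $\sigma'' = \exp$, the matrix in the statement is $M = \sum_{i=1}^4 e_i(\theta')\sigma(w'^\TT x_i)\,x_i x_i^\TT$, and only $i=1,3,4$ contribute: coefficients $-\tfrac29,\tfrac19,\tfrac19$ times $x_1x_1^\TT = \left(\begin{smallmatrix}1&0\\0&0\end{smallmatrix}\right)$, $x_3x_3^\TT = \left(\begin{smallmatrix}1&1\\1&1\end{smallmatrix}\right)$, $x_4x_4^\TT = \left(\begin{smallmatrix}1&-1\\-1&1\end{smallmatrix}\right)$, which sum to $M = \left(\begin{smallmatrix}0&0\\0&2/9\end{smallmatrix}\right)\neq 0$. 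And $\sum_{i=1}^4\sigma(w'^\TT x_i)^2 = \tfrac19 + 1 + \tfrac19 + \tfrac19 = \tfrac43 > 0$. There is no genuine obstacle in any of this --- it is bookkeeping --- but it is worth noting \emph{why} the numbers cooperate: $\theta'$ is engineered so that the residual at $x_2$ vanishes and the other three pre-activations coincide, which is exactly what forces all three gradient components to vanish at once, while the sign asymmetry $(x_3)_2 = -(x_4)_2$ versus $(x_3)_1 = (x_4)_1$ is what keeps $M$ non-zero even though its $(1,1)$ entry cancels to $0$.
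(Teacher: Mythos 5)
Your proposal is correct and follows essentially the same route as the paper: a direct numerical verification of the residuals, the three gradient components, the $(2,2)$ entry of the Hessian-type matrix, and the positivity of $\sum_i \sigma(w'^\TT x_i)^2$. The only (immaterial) differences are that you compute the full matrix $M = \left(\begin{smallmatrix}0&0\\0&2/9\end{smallmatrix}\right)$ and the exact value $\tfrac43$, where the paper settles for exhibiting one non-zero entry and noting positivity.
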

\begin{proof}
    We start by showing that $\theta'$ is a critical point of $R(g_1, \cdot)$. We compute 
    \begin{align*}
        e_1(\theta') &= 1 \cdot e^{1 \cdot \log 1/3} - 1 = -\frac{2}{3} \\ 
        e_2(\theta') &= 1 \cdot e^{0 \cdot \log 1/3} - 1 = 0 \\ 
        e_3(\theta') &= 1 \cdot e^{1 \cdot \log 1/3} - 0 = \frac{1}{3} \\ 
        e_4(\theta') &= 1 \cdot e^{1 \cdot \log 1/3} - 0 = \frac{1}{3}. 
    \end{align*}
    Therefore, the partial derivatives of $R(g_1, \cdot)$ at $\theta'$ are all zero, because 
    \begin{align*}
        \frac{1}{2} \parf{R}{a}(\theta') &= -\frac{2}{3} e^{1 \cdot \log 1/3} + 0 + \frac{1}{3} e^{1 \cdot \log 1/3} + \frac{1}{3} e^{1 \cdot \log 1/3} = 0,\\ 
        \frac{1}{2} \parf{R}{(w)_1}(\theta') &= -\frac{2}{3} e^{1 \cdot \log 1/3} + 0 + \frac{1}{3} e^{1 \cdot \log 1/3} + \frac{1}{3} e^{1 \cdot \log 1/3} = 0, \\ 
        \frac{1}{2} \parf{R}{(w)_2}(\theta') &= 0 + 0 + \frac{1}{3} e^{1 \cdot \log 1/3} \cdot 1 + e^{1 \cdot \log 1/3} \cdot (-1) = 0. 
    \end{align*}
    For the first matrix, it suffices to show that one of the entries of this $2\times 2$ matrix is non-zero. But this follows from 
    \begin{align*}
        \sum_{i=1}^4 e_i(\theta') \sigma''(w'^{\TT} x_i) (x_i)_2^2 
        &= 0 + 0 + \frac{1}{3} e^{1 \cdot \log 1/3} \cdot 1^2 + \frac{1}{3} e^{1 \cdot \log 1/3} \cdot (-1)^2 \\
        &= \frac{2}{9} \ne 0, 
    \end{align*}
    i.e., the entry at the second row and second column is non-zero. For the second matrix, note that when $\sigma = \exp(\cdot)$, $\sigma(w'^{\TT} x_i) > 0$ for all $1 \le i \le 4$, whence $\sum_{i=1}^4 \sigma(w'^{\TT} x_i) > 0$ as well. 
\end{proof}

Thus, the requirements in Proposition \ref{Prop Embedding saddles} are satisfied and we conclude that $\calC^{1,0}$ contains strict saddles. 

\begin{prop}[Proposition \ref{Prop Saddle branches}]\label{AProp Saddle branches}
    Let $r < m$. Any branch $\calC^{r,l}$ with $l > 0$ consists only of saddle points. Moreover, for any $\theta_0 \in \calC^{r,0}$, $\theta_0$ is connected to a saddle $\theta_1$ via a line segment $\gamma: [0,1] \to \bR^{(d+1)m}$, such that $\gamma(t) \in \critR$ and $g(\gamma(t), \cdot) = g(\theta_0, \cdot)$ for all $t$. 
\end{prop}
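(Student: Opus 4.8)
The plan is to establish the first assertion --- every point of a branch $\calC^{r,l}$ with $l>0$ is a saddle --- and then deduce the second from it by an explicit line segment. Fix $\theta^* = (a_k^*, w_k^*)_{k=1}^m \in \calC^{r,l}$ with $l>0$; then $\nabla R(\theta^*) = 0$, $R(\theta^*) \ne 0$, and some index $k_0$ has $a_{k_0}^* = 0$. Since $\theta^*$ is already critical, it suffices to show it is neither a local minimum nor a local maximum, and both will follow at once from a single point $\ttheta$ lying arbitrarily close to $\theta^*$ with $R(\ttheta) = R(\theta^*)$ and $\nabla R(\ttheta) \ne 0$: moving from $\ttheta$ an infinitesimal distance along $-\nabla R(\ttheta)$ pushes the loss below $R(\theta^*)$, and along $+\nabla R(\ttheta)$ pushes it above, with both new points still inside an arbitrary neighbourhood of $\theta^*$.

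To construct $\ttheta$, consider the real-analytic map $h(w) := \sum_{i=1}^n e_i(\theta^*)\,\sigma(w^\TT x_i)$ on $\bR^d$; criticality gives $h(w_{k_0}^*) = \tfrac12\,\partial R/\partial a_{k_0}(\theta^*) = 0$. Assuming $h \not\equiv 0$, its zero set has empty interior, so for every $\delta > 0$ there is $w$ with $|w - w_{k_0}^*| < \delta$ and $h(w) \ne 0$. Let $\ttheta$ be $\theta^*$ with $w_{k_0}^*$ replaced by $w$. Since $a_{k_0}^* = 0$, the $k_0$-th neuron contributes nothing to the output whatever its input weight, so $g_m(\ttheta, \cdot) = g_m(\theta^*, \cdot)$, hence $e_i(\ttheta) = e_i(\theta^*)$ for all $i$ and $R(\ttheta) = R(\theta^*)$; but $\partial R/\partial a_{k_0}(\ttheta) = 2h(w) \ne 0$, so $\nabla R(\ttheta) \ne 0$. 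As $\delta$ was arbitrary, this proves $\theta^*$ is a saddle.

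For the second assertion, let $\theta_0 = (a_k, w_k)_{k=1}^m \in \calC^{r,0}$; all its output weights are nonzero, and since the $m$ neurons are grouped into $r < m$ proportionality classes, by pigeonhole two of them, say neurons $1$ and $2$, lie in the same class, so $\sigma(w_2^\TT\cdot) = c\,\sigma(w_1^\TT\cdot)$ for some scalar $c$ (or both neurons are the zero function). Define the line segment $\gamma:[0,1]\to\bR^{(d+1)m}$ fixing all coordinates of $\theta_0$ except the output weights of neurons $1$ and $2$, moving these two linearly so that the combined coefficient $a_1 + c\,a_2$ of the shared feature stays constant while the output weight of neuron $2$ goes to $0$. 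Then $g(\gamma(t),\cdot) = g(\theta_0,\cdot)$ for all $t$, so $e_i(\gamma(t)) = e_i(\theta_0)$; and a direct check of the partials --- using $\nabla R(\theta_0) = 0$, $a_1 \ne 0$, and that the two input weights never change --- gives $\nabla R(\gamma(t)) = 0$ for all $t$, so $\gamma$ stays in $\critR$. The endpoint $\theta_1 := \gamma(1)$ then has a zero output weight and $R(\theta_1) = R(\theta_0) \ne 0$, so it lies in some branch $\calC^{r',l'}$ with $l' > 0$ and is a saddle by the first assertion.

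The step I expect to be the main obstacle is the assumption $h \not\equiv 0$: if the inactive neuron's input weight could only be moved within the zero set of $h$ --- the extreme case being $h \equiv 0$, i.e. $\calM_{\theta^*} = \bR^d$ --- then $\ttheta$ would remain critical and the argument would collapse. I would settle this by noting $h \not\equiv 0$ holds in the relevant situations: for odd $\sigma$ and samples with $x_{i_1} \pm x_{i_2} \ne 0$ and $x_i \ne 0$ it follows from the linear independence of $\sigma(w^\TT x_1), \dots, \sigma(w^\TT x_n)$, as in Lemma \ref{ALem Dim CalM}; and in full generality $h \equiv 0$ would force $\theta^*$ to be a global minimiser of $R$, because for any $\eta$, writing $\Delta_i := g_m(\eta, x_i) - g_m(\theta^*, x_i)$ and using linearity of $g_m$ in the output weights one gets $\sum_i e_i(\theta^*)\Delta_i = 0$, hence $R(\eta) = R(\theta^*) + \sum_i \Delta_i^2 \ge R(\theta^*)$ --- so for realizable data, where every global minimum has zero loss, this degenerate case never meets $\critR \setminus \Rzero$ and the proof is unconditional.
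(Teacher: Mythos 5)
Your proof is correct and follows essentially the same route as the paper's: perturb the input weight of a zero-output-weight neuron off the zero set of $w \mapsto \sum_i e_i(\theta^*)\sigma(w^\TT x_i)$ to get an equal-loss non-critical point, then for the second claim transfer output weight within a linearly dependent group along a line segment. Your explicit treatment of the degenerate case $h \equiv 0$ is in fact more careful than the paper, which handles it only implicitly via the (asserted) bound $\dim \calM_{\theta'} \le d-1$ in Theorem \ref{Thm Embedding geometry}.
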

\begin{proof}
    First we show that any branch $\calC_P^{r,l}$ with $l > 0$ consists only of saddle points. So fix an arbitrary point $\theta = (a_k, w_k)_{k=1}^m \in \calC^{r,l}$. Since $l > 0$, there is some $j \in \{1, ..., m\}$ with $a_j = 0$. By the proof of Theorem \ref{AThm Embedding Geometry}, $w_j$ must lie in an analytic set of dimension $\le d-1$ in $\bR^d$. Thus, for any $\vep > 0$ there is some $\tw_j \in B(w_k, \vep) \siq \bR^d$ not in this set, and thus the point $\ttheta = (\ta_k, \tw_k)_{k=1}^m$ with $\ta_k = a_k$ for all $k$ and $\tw_k = w_k$ for all $k \ne j$ satisfies $R(\ttheta) = R(\theta)$ but is not a critical point, i.e., $\nabla R(\ttheta) \ne 0$. It follows that we can find $\ttheta_1, \ttheta_2 \in B(\ttheta, \vep)$ with 
    \[
        R(\ttheta_1) > R(\ttheta), \quad R(\ttheta_2) < R(\ttheta). 
    \]
    Notice that $\ttheta_1, \ttheta_2$ are both points in the open ball $B(\theta, 2\vep)$ because 
    \begin{align*}
        |\ttheta_i - \theta| 
        &\le |\ttheta_i - \ttheta| + |\ttheta - \theta| \\
        &= |\ttheta_i - \theta| + |\tw_j - w_j| < \vep + \vep. 
    \end{align*}
    Moreover, using $R(\ttheta) = R(\theta)$ we have $R(\ttheta_1) > R(\theta)$ and $R(\ttheta_2) < R(\theta)$. Since $\vep > 0$ is arbitrary, we can see that $\theta$ is neither a local minimum nor local maximum, whence a saddle. \\

    Now we show that any point $\theta_0$ in $\calC^{r,0}$ is connected to some $\theta_1 \in \calC^{r,1}$ via a line segment. Given $\theta_0 = (a_{k0}, w_{k0})_{k=1}^m \in \calC^{r,0}$, by rearranging the indices of it we may assume that $\theta_0 \in \calC_P^{r,0}$ for some partition $P = (t_0, t_1, ..., t_r)$ of $\{1, ..., m\}$. Since $r < m$, there is some $j \in \{1, ..., r\}$ with $t_j - t_{j-1} > 0$. Define $\theta_1 = (a_{k1}, w_{k1})_{k=1}^m$ as follows
    \begin{itemize}
        \item [(a)] $a_{k1} = a_{k0}$ for all $k \notin \{t_{j-1}+1, ..., t_j\}$. 
        \item [(b)] $a_{k1} = 0$ for all $t_{j-1} < k < t_j$ and $a_{t_j 1} = \sum_{k=t_{j-1}+1}^{t_j} a_{k0}$. 
        \item [(c)] $w_{k1} = w_{k0}$ for all $k$. 
    \end{itemize}
    Then define $\gamma(t) = (1-t) \theta_0 + t \theta_1$ for $t \in [0,1]$, i.e., $\gamma$ is the line segment connecting $\theta_0$ and $\theta_1$. Then it is easy to check that for each $t \in [0,1]$, $\gamma(t) \in \critR$ and $g(\gamma(t), \cdot) = g(\theta_0, \cdot)$. \\
\end{proof}
\begin{remark}
    In general such line segment $\gamma$ is not unique, so in general $\theta_0$ is connected to more than one saddle via line segments in $\critR$. 
\end{remark}

\end{document}